\providecommand{\lin}[1]{\ensuremath{\left\langle #1 \right\rangle}}
\providecommand{\abs}[1]{\left\lvert#1\right\rvert}
\providecommand{\norm}[1]{\left\lVert#1\right\rVert}
  \providecommand{\R}{\mathbb{R}} % Reals
  \DeclareMathOperator{\E}{{\mathbb E}}
  \providecommand{\EE}[2]{{\mathbb E}_{#1}\left.#2\right. }  %expectation  
  \providecommand{\EEb}[2]{{\mathbb E}_{#1}\left[#2\right] } %expectation,  with brackets
  \providecommand{\0}{\mathbf{0}}
  \renewcommand{\aa}{\mathbf{a}}
  \providecommand{\bb}{\mathbf{b}}
  \renewcommand{\gg}{\mathbf{g}}
  \providecommand{\nn}{\mathbf{n}}
  \providecommand{\uu}{\mathbf{u}}
  \providecommand{\xx}{\mathbf{x}}
  \providecommand{\yy}{\mathbf{y}}
  \providecommand{\mI}{\mathbf{I}}
  \providecommand{\cC}{\mathcal{C}}
  \providecommand{\cD}{\mathcal{D}}
  \providecommand{\cN}{\mathcal{N}}
  \providecommand{\cO}{\mathcal{O}}
\providecommand{\mycomment}[3]{\todo[caption={},size=footnotesize,color=#1!20]{\textbf{#2: }#3}}%
\providecommand{\inlinecomment}[3]{%
  {\color{#1}#2: #3}}%
\newcommand\commenter[2]%
\newcommand\csname i#1\endcsname[1]{\inlinecomment{#2}{#1}{##1}}
\newcommand\csname #1\endcsname[1]{\mycomment{#2}{#1}{##1}}
\newtheorem{lemma}{Lemma}
\newtheorem{definition}{Definition}
\newtheorem{remark}[lemma]{Remark}
\newtheorem{assumption}{Assumption}
\newtheorem{theorem}[lemma]{Theorem}
\newtheorem{example}[lemma]{Example}
\def\<#1,#2>{\left\langle #1,#2\right\rangle}
\newcommand{\RNum}[1]{\uppercase\expandafter{\romannumeral #1\relax}}
\definecolor{mydarkblue}{rgb}{0,0.08,0.45}
\DeclareMathOperator*{\topk}{{top-k}}
\DeclareMathOperator{\randk}{{rand-k}}
\newcommand{\factor}{0.8}
\icmltitlerunning{On the Convergence of SGD with Biased Gradients}
\newcommand{\remove}[1]{}
\begin{document}

\twocolumn[
\icmltitle{On the Convergence of SGD with Biased Gradients} 

\icmlsetsymbol{equal}{*}

\begin{icmlauthorlist}
\icmlauthor{Ahmad Ajalloeian}{unil}
\icmlauthor{Sebastian U.~Stich}{epfl}
\end{icmlauthorlist}

\icmlaffiliation{unil}{University of Lausanne, Switzerland}
\icmlaffiliation{epfl}{EPFL, Switzerland}

\icmlcorrespondingauthor{SU Stich}{sebastian.stich@epfl.ch}

\icmlkeywords{Machine Learning, ICML}

\vskip 0.3in
]

\printAffiliationsAndNotice{}  %

\begin{abstract}
We analyze the complexity of biased stochastic gradient methods (SGD), where individual updates are corrupted by deterministic, i.e.\ \emph{biased} error terms. 
We derive convergence results for smooth (non-convex) functions and give improved rates under the Polyak-\L{}ojasiewicz condition.
We quantify how the magnitude of the bias impacts the attainable accuracy  and the convergence rates (sometimes leading to divergence).

Our framework covers many applications where either only biased gradient updates are available, or preferred, over unbiased ones for performance reasons.
For instance, in the domain of distributed learning, biased gradient compression techniques such as top-$k$ compression have been proposed as a tool to alleviate the communication bottleneck and in derivative-free optimization, only biased gradient estimators can be queried.
 We discuss a few guiding examples that show the broad applicability of our analysis. 
\end{abstract}

\section{Introduction}
\label{sec: intro}
The stochastic gradient descent (SGD) algorithm 
has proven to be effective for many machine learning applications. 
This first-order method has been intensively studied in theory and practice in recent years~\citep[cf.][]{Bottou2018:book}.
Whilst vanilla SGD crucially depends on unbiased gradient oracles, variations with \emph{biased} gradient updates have been considered in a few application domains recently. 

For instance, in the context of distributed parallel optimization where the data is split among several compute nodes, the standard mini-batch SGD updates can yield a bottleneck in large-scale systems and 
techniques such as structured sparsity~\cite{top-k_Alistarh_DBLP:conf/nips/AlistarhH0KKR18,Wangni2018:sparsification,sparsified_SGD_Stich:2018:SSM:3327345.3327357} or asynchronous updates~\cite{Recht2011:hogwild,Li2014:delay}
 have been proposed to reduce communication costs. However, sparsified or delayed SGD updates are no longer unbiased and the methods become more difficult to analyze~\cite{error_feedback_DBLP:journals/corr/abs-1909-05350,biased_compression_DBLP:journals/corr/abs-2002-12410}.

Another class of methods that do not have access to unbiased gradients are zeroth-order methods which find application for optimization of black-box functions~\cite{Nesterov:2017:RGM:3075317.3075511}, or for instance in deep learning for finding adversarial examples \cite{Universal_perturbation_2016arXiv161008401M,ZO_based_black_box_attackChen:2017:ZZO:3128572.3140448}. 
Some theoretical works that analyze zeroth-order training methods argue that the standard methods often operate with a \textit{biased} estimator of the true gradient~\cite{Nesterov:2017:RGM:3075317.3075511, ZO_SVRG_NIPS2018_7630}, in contrast to SGD that relies on unbiased oracles.

Approximate gradients naturally also appear in many other applications, such as in the
context of smoothing techniques, proximal updates or preconditioning~\cite{Aspremont2008:approximate,Schmidt2011:inexact,devolder_inexact_oracle_DBLP:journals/mp/DevolderGN14,Tappenden2016:inexact,Karimireddy2018:inexact}.

Many standard textbook analyses for SGD typically require unbiased gradient information to prove convergence~\cite{Lacoste2012:simpler,Bottou2018:book}. Yet, the manifold applications mentioned above, witness that SGD \emph{can} converge even when it has only access to inexact or biased gradient oracles. However, all these methods and works required different analyses which had to be developed separately for every application.
In this work, we reconcile insights that have been accumulated previously, explicitly or implicitly, and develop novel results, resulting in a clean framework for the analysis of a general class of biased SGD methods.

We study  SGD under a biased gradient oracle model. We go beyond the standard modeling assumptions that require unbiased gradient oracles with bounded noise~\cite{Bottou2018:book}, but allow gradient estimators to be biased, for covering the important applications mentioned above. We find that SGD with biased updates can converge to the same accuracy as  SGD with unbiased oracles when the bias is not larger than the norm of the expected gradient. Such a multiplicative bound on the bias does for instance hold for quantized gradient methods~\cite{Alistarh2017:qsgd}. %
Without such a relative bound on the bias, biased SGD in general only converges towards a neighborhood of the solution, where the size of the neighborhood depends on the magnitude of the bias.

Although it is a widespread view in parts of the literature that SGD does only converge with unbiased oracles, our findings shows that SGD \emph{does not need} unbiased gradient estimates to converge.

\paragraph{Algorithm and Setting.} 
Specifically, we consider unconstrained optimization problems of the form:
\begin{equation}
    f^{\star} := \min _{\xx \in \mathbb{R}^{d}} f(\xx)
\end{equation}
where $f \colon \R^d \to \R$ is a smooth function and 
where we assume that we only have access to \emph{biased} and \emph{noisy} gradient estimator. We analyze the convergence of (stochastic) gradient descent:  
\begin{equation}
\label{eq: def of g_t}
    \xx_{t+1} = \xx_t - \gamma_t \gg_t, \qquad  \gg_t = \nabla f(\xx_t) + \bb_t + \nn_t, 
\end{equation}
where $\gamma_t$ is a sequence of step sizes and $\gg_t$ is a (potentially biased) gradient oracle for zero-mean noise $\nn_t$, $\E \nn_t=\0$, and bias $\bb_t$ terms. In the case $\bb_t = \0$, the gradient oracle $\gg_t$ becomes unbiased and we recover the setting of SGD. If in addition $\nn_t = \0$ almost surely, we get back to the classic Gradient Descent (GD) algorithm.

\paragraph{Contributions.} We show that biased SGD methods can in general only converge to a neighborhood of the solution but indicate also interesting special cases where the optimum still can be reached. Our framework covers smooth optimization problems (general non-convex problems and non-convex problems under the Polyak-\L{}ojasiewicz condition) and unifies the rates for both deterministic and stochastic optimization scenarios. 
We discuss a host of examples that are covered by our analysis (for instance top-$k$ compression, random-smoothing) and compare the convergence rates to prior work.

\section{Related work}
\label{subsec: related work}
Optimization methods with deterministic errors have been previously studied mainly in the context of deterministic gradient methods. For instance,  \citet{Bertsekas2002:book} analyzes gradient descent under the same assumptions on the errors as we consider here, however does not provide concise rates. Similar, but slightly different, assumptions have been made in~\cite{Hu2020:biasedspd} that considers finite-sum objectives.

Most analyses of biased gradient methods have been carried out with specific applications in mind. For instance, 
computation of approximate (i.e.\ corrupted) gradient updates is for many applications computationally more efficient than computing exact gradients, and therefore there was a natural interest in such methods~\cite{Schmidt2011:inexact,Tappenden2016:inexact,Karimireddy2018:inexact}. 
\citet{Aspremont2008:approximate,Baes2009:estimate,devolder_inexact_oracle_DBLP:journals/mp/DevolderGN14} specifically also consider the impact of approximate updates on accelerated gradient methods and show these schemes have to suffer from error accumulation, whilst non-accelerated methods often still converge under mild assumptions.
\citet{devolder_inexact_oracle_DBLP:journals/mp/DevolderGN14,smooth_convex_opt_devolder_RePEc:cor:louvco:2011070} 
consider a notion of inexact gradient oracles that generalizes the notion of inexact subgradient oracles \cite{Polyak1987:book}.
 We will show later that their notion of inexact oracles is stronger than the oracles that we consider in this work.

In distributed learning, both unbiased~\cite{Alistarh2017:qsgd,Zhang2017:zipml} and biased~\cite{Dryden2016:topk,Aji2017:topk,top-k_Alistarh_DBLP:conf/nips/AlistarhH0KKR18} compression techniques have been introduced to address the scalability issues and communication bottlenecks. \citet{top-k_Alistarh_DBLP:conf/nips/AlistarhH0KKR18} analyze the top-$k$ compression operator which sparsifies the gradient updates by only applying the top $k$ components. They prove a sublinear rate that suffers a slowdown compared to vanilla SGD (this gap could be closed with additional assumptions). Biased updates that are corrected with error-feedback~\cite{sparsified_SGD_Stich:2018:SSM:3327345.3327357,error_feedback_DBLP:journals/corr/abs-1909-05350} do not suffer from such a slowdown. However, these methods are not the focus of this paper.
In recent work, \citet{biased_compression_DBLP:journals/corr/abs-2002-12410} analyze top-$k$ compression for deterministic gradient descent, often denoted as greedy coordinate descent~\cite{Nutini:2015vd,%
greedy_coordinate_descent_aistats/KarimireddyKSJ19}. We recover greedy coordinate descent convergence rates as a special case (though, we are not considering coordinate-wise Lipschitzness as in those specialized works).
Asynchronous gradient methods have sometimes been studied trough the lens
of viewing updates as biased gradients~\cite{Li2014:delay}, but the additional structure of delayed gradients admits more fine-grained analyses~\cite{Recht2011:hogwild,Mania2017:perturbed}.

Another interesting class of biased stochastic methods has been studied in~
\cite{Nesterov:2017:RGM:3075317.3075511} in the context of gradient-free optimization. They show that the finite-difference gradient estimator that evaluates the function values (but not the gradients) at two nearby points, provides a biased gradient estimator. As a key observation, they can show randomized smoothing estimates an \emph{unbiased} gradient of a different smooth function that is close to the original function. This observation is leveraged in the proofs. In this work, we consider general bias terms (without inherent structure to be exploited), and hence our convergence rates are slightly weaker for this special case.
Randomized smoothing was further studied in~\cite{Duchi:2012hr,bach16:smooth}.

\citet{bandit_biased_aistats/HuAGS16} study bandit convex optimization with biased noisy gradient oracles. They assume that the bias can be bounded by an absolute constant. Our notion is more general and covers uniformly bounded bias as a special case.
Other classes of biased oracle have been studied for instance by 
\citet{Hu2020:biased} who consider conditional stochastic optimization, \citet{Chen2018:biased} who consider consistent biased estimators and
\citet{biased_q_learning_aistats/WangG20a} who study the
statistical efficiency of Q-learning algorithms
by deriving finite-time convergence guarantees for a specific class of biased stochastic approximation scenarios.

\section{Formal setting and assumptions}
\label{sec: setting_assumptions}
In this section we discuss our main setting and assumptions. All our results depend on the standard smoothness assumption, and some in addition on the PL condition. 

\subsection{Regularity Assumptions}
\begin{assumption}[$L$-smoothness]
\label{general_smoothness_assumption}
    The function $f \colon \mathbb{R}^{d} \to \mathbb{R}$ is differentiable and there exists a constant $L>0$ such that for all $\xx,\yy \in \R^n$:
    \begin{equation}
    \label{eq: smoothness_equivalent_def}
        f(\yy) \leq f(\xx) + \lin{\nabla f(\xx), \yy-\xx} + \frac{L}{2} \norm{\yy-\xx}^{2}\,.
    \end{equation}
\end{assumption}

Sometimes we will assume the Polyak- \L{}ojasiewicz (PL)  condition (which is implied by standard $\mu$-strong convexity, but is much weaker in general):
\begin{assumption}[$\mu$-PL]
\label{general_strong_convex_assumption}
    The function $f \colon \mathbb{R}^{d} \to \mathbb{R}$ is differentiable and there exists a constant $\mu > 0$ such that
    \begin{equation}
    \label{eq:PL}
        \norm{\nabla f(\xx)}^2 \geq 2\mu (f(\xx) - f^\star)\,, \quad \forall \xx \in \R^d. 
    \end{equation}
\end{assumption}

\subsection{Biased Gradient Estimators}
\label{sec:biasedestimators}
Finally, whilst the above assumptions are standard, we now introduce biased gradient oracles.

\begin{definition}[Biased Gradient Oracle]
\label{def: biased gradient oracle}
A map $\gg\colon \R^d \times \cD \to \R^d$ s.t.
\begin{align*}
    \gg(\xx,\xi) = \nabla f(\xx) + \bb(\xx) + \nn(\xx,\xi)
\end{align*}
for a \emph{bias} $\bb \colon \R^d \to \R^d$ and zero-mean noise $\nn \colon \R^d \times \cD \to \R^d$, that is $\EE{\xi}\nn(\xx,\xi)=\0$, $\forall \xx \in \R^d$.
\end{definition}

We assume that the noise and bias terms are bounded:
\begin{assumption}[$(M, \sigma^2)$-bounded noise]
\label{assumption:noise}
There exists constants $M,\sigma^2 \geq 0$ such that
\begin{equation*}
    \EE{\xi\!}{\norm{\nn(\xx,\xi)}^2} \leq  M\norm{\nabla f(\xx) + \bb(\xx)}^2 + \sigma^2, \quad \forall \xx \in \R^d\,.
\end{equation*}
\end{assumption}
\begin{assumption}[$(m, \zeta^2)$-bounded bias]
\label{assumption: bias}
There exists constants $0 \leq m < 1$, and $\zeta^2 \geq 0$ such that
\begin{equation*}
    \norm{\bb(\xx)}^2 \leq  m\norm{\nabla f(\xx)}^2 + \zeta^2, \quad \forall \xx \in \R^d\,.
\end{equation*}
\end{assumption}
\begin{remark}
\label{rem: noise remark}
Assumptions~\ref{assumption:noise}--\ref{assumption: bias} and the inequality $\norm{\aa + \bb}^2 \leq 2 \norm{\aa}^2 + 2\norm{\bb}^2$ imply that it further holds
\begin{align*}
 \EE{\xi\!}{\norm{\nn(\xx,\xi)}^2} \leq  \bar M \norm{\nabla f(\xx)}^2 + \bar\sigma^2, \quad \forall \xx \in \R^d\,.
\end{align*}
for $\bar M\leq 2M(1+m)\leq 4M$ and $ \bar \sigma^2 \leq \sigma^2 + 2M \zeta^2$. 
\end{remark}
Our assumption on the noise is similar as in~\cite{bottou10.1007/978-3-7908-2604-3_16,unified_sgd_analisys}
and generalizes the standard bounded noise assumption (for which only $M=0$ is admitted). By allowing the noise to grow with the gradient norm and bias, an admissible $\sigma^2$ parameter can often be chosen to be much smaller than under the constraint $M=0$.

Our assumption on the bias is similar as in~\cite{Bertsekas2002:book}.
For the special case when $\zeta^2= 0$ one might expect (and we prove later) that gradient methods can still converge for any bias strength $0 \leq m < 1$, as in expectation the corrupted gradients are still aligned with the true gradient, $\E_\xi \lin{\nabla f(\xx_t),\gg(\xx,\xi)} > 0$ \citep[see e.g.][]{Bertsekas2002:book}.
In the case $\zeta^2> 0$, gradient based methods can only converge to a region where $\norm{\nabla f(\xx)}^2 = \cO(\zeta^2)$.
We make these claims precise in Section~\ref{sec: biased gradient framework} below.

A slightly different condition than Assumption~\ref{assumption: bias} was considered in~\cite{Hu2020:biasedspd}, but they measure the relative error with respect to the stochastic gradient $\nabla f(\xx) + \nn(\xx,\xi)$, and not with respect to $\nabla f(\xx)$ as we consider here. Whilst we show that biased gradient methods converge for any $m < 1$, for instance \citet{Hu2020:biasedspd} required a stronger condition $m \leq \frac{\mu}{L}$ on $\mu$-strongly convex functions (see also Remark~\ref{rem:stronglyconvex} below).

\section{Biased SGD Framework}
\label{sec: biased gradient framework}
In this section we study the convergence of stochastic gradient descent (SGD) with biased gradient oracles as introduced in Definition~\ref{def: biased gradient oracle}, see also Algorithm \ref{algo: SGD_algorithm}.
For simplicity, we will assume constant stepsize $\gamma_t=\gamma$, $\forall t$,  in this section.

\begin{algorithm}[htb]
   \caption{Stochastic Gradient Descent (SGD)}
   \label{algo: SGD_algorithm}
\begin{algorithmic}
   \STATE {\bfseries Input:} step size sequence  $(\gamma_t)_{t \geq 0}$, $\xx_0 \in \R^d$
   
   \FOR{$t=0$ {\bfseries to} $T-1$}
   \STATE compute biased stochastic gradient $\gg_t = \gg(\xx_t,\xi)$ 
   \STATE $\xx_{t+1} \gets \xx_{t} - \gamma_{t} \gg_t$
   \ENDFOR
   \STATE {\bfseries Return:} $\xx_{T}$
\end{algorithmic}
\end{algorithm}

\subsection{Modified Descent Lemma}
A key step in our proof is to derive a modified version of the descent lemma for smooth functions~\citep[cf.][]{nesterov_book_DBLP:books/sp/Nesterov04}. We give all missing proofs in the appendix.

\begin{lemma}\label{lemma:decent-smoothness}
Let $f$ be $L$-smooth, $\xx_{t+1}$, $\xx_t$ as in~\eqref{eq: def of g_t} with gradient oracle as in Assumptions~\ref{assumption:noise}--\ref{assumption: bias}. Then for any stepsize $\gamma \leq \frac{1}{(M+1)L}$ it holds
\begin{align}
\begin{split}
     &\E_{\xi} \left[ f(\xx_{t+1})  - f(\xx_t) \mid \xx_t \right] \\
     &\qquad \leq  \frac{\gamma (m-1)}{2} \norm{\nabla f(\xx_t)}^2 + \frac{\gamma}{2} \zeta^2 +  \frac{\gamma^2 L}{2} \sigma^2  
\end{split}
\label{eq:bysmoothness}
\end{align}
\end{lemma}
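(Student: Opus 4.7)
The plan is to start from the standard smoothness inequality applied to $\xx_{t+1} = \xx_t - \gamma \gg_t$, take conditional expectations in $\xi$, and then carefully absorb the cross term between $\nabla f(\xx_t)$ and the bias $\bb_t$ by expanding a square. The choice $\gamma \le \frac{1}{(M+1)L}$ is exactly what is needed to turn the quadratic term into a usable negative contribution on $\norm{\nabla f(\xx_t)}^2$.

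First, I would apply Assumption~\ref{general_smoothness_assumption} with $\yy = \xx_{t+1}$, $\xx = \xx_t$, giving
\begin{equation*}
f(\xx_{t+1}) \le f(\xx_t) - \gamma \lin{\nabla f(\xx_t), \gg_t} + \tfrac{\gamma^2 L}{2}\norm{\gg_t}^2.
\end{equation*}
Taking $\E_\xi[\cdot \mid \xx_t]$, the zero-mean property of $\nn_t$ gives $\E_\xi \lin{\nabla f(\xx_t),\gg_t} = \norm{\nabla f(\xx_t)}^2 + \lin{\nabla f(\xx_t),\bb_t}$, and the bias-noise decomposition together with Assumption~\ref{assumption:noise} yields $\E_\xi \norm{\gg_t}^2 = \norm{\nabla f(\xx_t)+\bb_t}^2 + \E_\xi\norm{\nn_t}^2 \le (1+M)\norm{\nabla f(\xx_t)+\bb_t}^2 + \sigma^2$.

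The main (and only slightly delicate) step is then to use the stepsize condition $\gamma(1+M)L \le 1$ so that $\tfrac{\gamma^2 L (1+M)}{2} \le \tfrac{\gamma}{2}$, replace the quadratic term by $\tfrac{\gamma}{2}\norm{\nabla f(\xx_t)+\bb_t}^2$, and expand this square. The cross term $\gamma\lin{\nabla f(\xx_t),\bb_t}$ cancels exactly against the cross term coming from $-\gamma\E_\xi\lin{\nabla f(\xx_t),\gg_t}$, leaving
\begin{equation*}
\E_\xi[f(\xx_{t+1})\mid\xx_t] - f(\xx_t) \le -\tfrac{\gamma}{2}\norm{\nabla f(\xx_t)}^2 + \tfrac{\gamma}{2}\norm{\bb_t}^2 + \tfrac{\gamma^2 L}{2}\sigma^2.
\end{equation*}
Finally, applying Assumption~\ref{assumption: bias} to bound $\norm{\bb_t}^2 \le m\norm{\nabla f(\xx_t)}^2 + \zeta^2$ and regrouping the $\norm{\nabla f(\xx_t)}^2$ terms produces the claimed factor $\tfrac{\gamma(m-1)}{2}$, concluding the argument.

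I do not anticipate a real obstacle here: the result is essentially a one-step calculation. The only thing to get right is handling the bias term without incurring an extra $\lin{\nabla f(\xx_t),\bb_t}$ (which would be problematic since it has no sign control); the clean cancellation after replacing $\tfrac{\gamma^2 L(1+M)}{2}$ by $\tfrac{\gamma}{2}$ is what makes the estimate tight and explains why this particular stepsize threshold appears.
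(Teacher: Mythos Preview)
Your proposal is correct and follows essentially the same route as the paper's proof: apply smoothness, take conditional expectation using the bias--variance decomposition and Assumption~\ref{assumption:noise}, use $\gamma(M+1)L\le 1$ to replace $\tfrac{\gamma^2 L(M+1)}{2}$ by $\tfrac{\gamma}{2}$, then observe the identity $-2\lin{\nabla f(\xx_t),\nabla f(\xx_t)+\bb_t}+\norm{\nabla f(\xx_t)+\bb_t}^2=-\norm{\nabla f(\xx_t)}^2+\norm{\bb_t}^2$ before invoking Assumption~\ref{assumption: bias}. The cancellation of the cross term you highlight is exactly the algebraic point the paper uses.
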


When $M=m=\zeta^2=0$ we recover (even with the same constants) the standard descent lemma.

\subsection{Gradient Norm Convergence}
We first show that Lemma~\ref{lemma:decent-smoothness} allows to derive convergence on all smooth (including non-convex) functions, but only with respect to gradient norm, as mentioned in Section~\ref{sec:biasedestimators} above.
By rearranging~\eqref{eq:bysmoothness} and the notation $F_{t}:=\E{f(\xx_t)-f^\star}$, $F=F_0$ we obtain
\begin{align*}
 \frac{(1-m)}{2} \E \norm{\nabla f(\xx_t)}^2 \leq \frac{F_t - F_{t+1}}{\gamma} + \frac{\zeta^2}{2} + \frac{\gamma L \sigma^2}{2}
\end{align*}
and by summing and averaging over $t$,
\begin{align*}
  \frac{1-m}{2}  \Psi_T \leq \frac{F_0}{T \gamma} + \frac{\zeta^2}{2} +  \frac{\gamma L \sigma^2}{2}
\end{align*}
where $\Psi_T := \frac{1}{T} \sum_{t=0}^{T-1} \E \norm{\nabla f(\xx_t)}^2$. 
We summarize this observation in the next lemma:
\begin{lemma}\label{lemma:new}
Under Assumptions~\ref{general_smoothness_assumption}, \ref{assumption:noise}, \ref{assumption: bias}, and for any stepsize $\gamma \leq \frac{1}{(M+1)L}$ it holds after $T$ steps of SGD:
\begin{align*}
 \Psi_{T} \leq \left(\frac{2F}{T\gamma (1-m)} + \frac{\gamma L \sigma^2}{1-m} \right) + \frac{\zeta^2}{1-m}\,,
\end{align*}
where $\Psi_T$ is defined as above.
\end{lemma}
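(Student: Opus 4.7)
The plan is to derive the bound as an almost immediate consequence of the modified descent lemma (Lemma~\ref{lemma:decent-smoothness}), which has already absorbed the bias-induced cross term into the negative quadratic on $\norm{\nabla f(\xx_t)}^2$ via the stepsize condition $\gamma \leq \frac{1}{(M+1)L}$. In fact, the discussion preceding the lemma statement essentially carries out the argument, so my proof will mainly consist of organizing that derivation and justifying each step.

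First I would rearrange inequality~\eqref{eq:bysmoothness} so that the nonnegative quantity $\tfrac{1-m}{2}\norm{\nabla f(\xx_t)}^2$ sits on the left-hand side; this is legitimate because Assumption~\ref{assumption: bias} guarantees $m<1$, so $(1-m)>0$ and the sign of the gradient-norm term flips as claimed. Taking total expectation (using the tower rule to handle the conditional expectation) and introducing $F_t := \E[f(\xx_t)-f^\star]$ gives
\begin{equation*}
\frac{1-m}{2}\, \E\norm{\nabla f(\xx_t)}^2 \leq \frac{F_t - F_{t+1}}{\gamma} + \frac{\zeta^2}{2} + \frac{\gamma L \sigma^2}{2}.
\end{equation*}

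Next I would sum this inequality over $t=0,\ldots,T-1$ and divide by $T$. The right-hand side telescopes to $\tfrac{F_0 - F_T}{T\gamma}$ plus the two constant terms, and by the definition of $f^\star$ we have $F_T \geq 0$, so we may upper bound $F_0 - F_T \leq F_0 = F$. This yields
\begin{equation*}
\frac{1-m}{2}\, \Psi_T \leq \frac{F}{T\gamma} + \frac{\zeta^2}{2} + \frac{\gamma L \sigma^2}{2}.
\end{equation*}
Finally, dividing through by $\tfrac{1-m}{2}$ produces exactly the stated bound.

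There is no substantive obstacle in this argument once Lemma~\ref{lemma:decent-smoothness} is in hand; the only point worth flagging is that the division by $(1-m)$ is valid precisely because of the strict inequality $m<1$ in Assumption~\ref{assumption: bias}. This is also the mechanism by which the bias strength inflates the final rate by a factor $(1-m)^{-1}$ and, crucially, the reason the bias floor $\tfrac{\zeta^2}{1-m}$ persists regardless of how small $\gamma$ or how large $T$ is chosen—the additive bias $\zeta^2$ cannot be averaged out, in contrast to the noise variance $\sigma^2$ whose contribution $\tfrac{\gamma L \sigma^2}{1-m}$ can be driven to zero by decreasing $\gamma$.
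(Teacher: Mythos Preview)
Your proposal is correct and follows exactly the paper's approach: the paper derives Lemma~\ref{lemma:new} inline by rearranging~\eqref{eq:bysmoothness}, taking expectations to pass to $F_t$, telescoping over $t=0,\dots,T-1$, dropping $F_T\geq 0$, and dividing by $(1-m)/2$. Your write-up is slightly more explicit about the tower rule and the use of $m<1$, but there is no substantive difference.
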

The quantity on the left hand side is equal to the expected gradient norm of a uniformly at random selected iterate, $\Psi_T = \E \norm{\nabla f(\xx_{\rm out})}^2$ for $\xx_{\rm out} \in_{\rm u.a.r.} \{\xx_0,\dots,\xx_{T-1}\}$ and a standard convergence measure. Alternatively one could consider $\min_{t \in [T]} \E \norm{\nabla f(\xx_t)}^2 \leq \Psi_T$.

If there is no additive bias, $\zeta^2=0$, then Lemma~\ref{lemma:new} can be used to show convergence of SGD by carefully choosing the stepsize to minimize the term in the round bracket. However, with a bias $\zeta^2 > 0$ SGD can only converge to a $\cO(\zeta^2)$ neighborhood of the solution. 

\begin{theorem}\label{thm:smooth}
Under Assumptions~\ref{general_smoothness_assumption}, \ref{assumption:noise}, \ref{assumption: bias}, and by choosing the stepsize
$\gamma=\min \left\{ \frac{1}{(M+1)L}, \frac{\epsilon (1-m) + \zeta^2}{2 L \sigma^2} \right\}$ for $\epsilon \geq 0$, then
\begin{align*}
    T =  \cO \left(\frac{M+1}{\epsilon(1-m) + \zeta^2} 
     + \frac{\sigma^2}{\epsilon^2 (1-m)^2 + \zeta^4}   \right) \cdot  LF
\end{align*}
iterations are sufficient to obtain $\Psi_T = \cO \bigl( \epsilon + \frac{\zeta^2}{1-m} \bigr)$.
\end{theorem}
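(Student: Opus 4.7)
The plan is to substitute the proposed stepsize directly into Lemma~\ref{lemma:new} and solve for a sufficient $T$; the lemma already carries out all of the analytical work. For brevity I introduce $\tilde\epsilon := \epsilon(1-m) + \zeta^2$, so the target accuracy $\epsilon + \zeta^2/(1-m)$ equals $\tilde\epsilon/(1-m)$.

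The second branch of the $\min$ in the stepsize enforces $\gamma L \sigma^2 \leq \tilde\epsilon/2$, so the noise contribution in Lemma~\ref{lemma:new} satisfies $\gamma L \sigma^2/(1-m) \leq \tilde\epsilon/(2(1-m)) = \epsilon/2 + \zeta^2/(2(1-m))$. Combined with the bias term $\zeta^2/(1-m)$, these two pieces are already of the desired order $O(\epsilon + \zeta^2/(1-m))$. What remains is to drive the first term $2F/(T\gamma(1-m))$ down to the same order, which requires $T \geq 2F/(\gamma\tilde\epsilon)$. Substituting each branch of the $\min$ for $\gamma$ separately, the curvature-limited stepsize $\gamma = 1/((M+1)L)$ contributes $T \gtrsim (M+1)LF/\tilde\epsilon$, while the noise-limited stepsize $\gamma = \tilde\epsilon/(2L\sigma^2)$ contributes $T \gtrsim LF\sigma^2/\tilde\epsilon^2$. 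A $\min$ in $\gamma$ translates to a sum (equivalently a $\max$) in the resulting $T$, giving a sufficient $T = O(LF(M+1)/\tilde\epsilon + LF\sigma^2/\tilde\epsilon^2)$.

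The only remaining step is cosmetic: to match the stated bound, I use $\tilde\epsilon^2 = (\epsilon(1-m)+\zeta^2)^2 \geq \epsilon^2(1-m)^2 + \zeta^4$ (the cross term is nonnegative), which yields $1/\tilde\epsilon^2 \leq 1/(\epsilon^2(1-m)^2+\zeta^4)$ and therefore $LF\sigma^2/\tilde\epsilon^2 \leq LF\sigma^2/(\epsilon^2(1-m)^2+\zeta^4)$, absorbing the $\sigma^2$-term into the form claimed. I do not anticipate any real obstacle: once Lemma~\ref{lemma:new} is in hand the whole theorem reduces to straightforward algebra plus a two-case split on which branch of the $\min$ is active. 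The one thing to watch is not conflating the scales $\epsilon$, $\tilde\epsilon$, and the $(1-m)$ factors while rearranging terms.
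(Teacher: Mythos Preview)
Your proposal is correct and follows essentially the same route as the paper: both arguments plug the prescribed stepsize into Lemma~\ref{lemma:new} and split on which branch of the $\min$ is active. The only cosmetic difference is that you carry $\tilde\epsilon = \epsilon(1-m)+\zeta^2$ through the computation directly (which matches the theorem's stepsize as stated), whereas the paper's proof first argues with $\epsilon(1-m)$ alone and then appends the $\zeta^2$-dependence at the end via the observation that one never needs $\epsilon < \zeta^2/(1-m)$.
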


If there is no bias, $\zeta^2=m=0$, then this result recovers the standard convergence rate of gradient descent (when also $\sigma^2=M=0$) or stochastic gradient descent, in general.
When the biased gradients are correlated with the gradient (i.e. $\zeta^2=0$, $m>0$), then the exact solution can still be found, though the number of iterations increases by a factor of $(1-m)$ in the deterministic case, or by $(1-m)^2$ if the noise term is dominating the rate.
When there is an uncorrelated bias, $\zeta^2>0$, then SGD can only converge to a neighborhood of a stationary point and a \emph{finite number of iterations} suffice to converge. This result is tight in general, as we show in Remark~\ref{rem:error-floor} below. 

\begin{remark}[Tightness of Error-floor]
\label{rem:error-floor}
Consider a $L$-smooth function $f \colon \R^d \to \R$ with  gradient oracle 
$\gg(\xx)= \nabla f(\xx) + \rho(\xx)\bb$, for $0 \leq m \leq 1$, $\bb \in \R^d$ with $\norm{\bb}^2=\zeta^2$ and $\rho(\xx)^2:= 1 + \frac{m}{\zeta^2}\norm{\nabla f(\xx)}^2$.
 This oracle has $(m,\zeta^2)$-bounded bias. 
 For any stationary point $\xx^\star$ with $\nabla \gg(\xx)=0$, it holds $\nabla f(\xx^\star)=-\rho(\xx^\star)\bb$, i.e. $\norm{\nabla f(\xx^\star)}^2 = \frac{\zeta^2}{1-m}$.
\end{remark}

\subsection{Convergence under PL condition}
\label{subsec: convergence PL}
To show convergence under the PL condition, we observe that~\eqref{eq:bysmoothness} and~\eqref{eq:PL} imply
\begin{align}
    F_{t+1} \leq \left(1-\gamma \mu(1-m) \right) F_t + \frac{\gamma \zeta^2}{2} + \frac{\gamma^2 L \sigma^2 }{2}\,. \label{eq:recusion}
\end{align}
By now unrolling this recursion we obtain the next theorem.

\begin{theorem}
\label{thm: convergence under PL condition}
Under Assumptions~\ref{general_smoothness_assumption}--%
\ref{assumption: bias} it holds for any stepsize $\gamma \leq \frac{1}{(M+1)L}$ that
\begin{align*}
 F_{T} \leq (1-\gamma \mu (1-m))^T F_0 + \frac{1}{2} \Xi\,,
\end{align*}
where 
\begin{align}
\Xi := \frac{ \zeta^2}{\mu (1-m)} + \frac{\gamma L \sigma^2}{\mu (1-m)}\,. \label{eq:floor}
\end{align}
Consequently, by choosing the stepsize $\gamma = \min\bigl\{\frac{1}{(M+1)L}, \frac{\epsilon \mu (1-m) + \zeta^2}{L \sigma^2} \bigr\}$, for $\epsilon \geq 0$, %
\begin{align*}
 T = \tilde \cO \left( (M+1) \log \frac{1}{\epsilon} + \frac{\sigma^2}{\epsilon \mu (1-m) + \zeta^2} \right) \cdot \frac{\kappa}{1-m}
\end{align*}
iterations suffice for $F_T= \cO \bigl(\epsilon + \frac{\zeta^2}{\mu (1-m)} \bigr)$. Here $\kappa := \frac{L}{\mu}$ and the $\tilde \cO(\cdot)$ notation hides logarithmic factors.
\end{theorem}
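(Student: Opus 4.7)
The plan is to start from the one-step recursion~\eqref{eq:recusion}, which is already provided in the excerpt as a direct consequence of Lemma~\ref{lemma:decent-smoothness} (taking conditional expectation and applying the PL inequality~\eqref{eq:PL} to the $-\gamma(1-m)\norm{\nabla f(\xx_t)}^2/2$ term). Unrolling~\eqref{eq:recusion} over $t=0,\dots,T-1$ and using the geometric series bound $\sum_{t=0}^{T-1}(1-\gamma\mu(1-m))^t\leq\frac{1}{\gamma\mu(1-m)}$ (valid because the step-size condition $\gamma\leq\frac{1}{(M+1)L}\leq\frac{1}{\mu(1-m)}$ keeps the contraction factor in $[0,1]$) yields
\begin{align*}
F_T \leq (1-\gamma\mu(1-m))^T F_0 + \frac{1}{\gamma\mu(1-m)}\left(\frac{\gamma\zeta^2}{2}+\frac{\gamma^2 L\sigma^2}{2}\right),
\end{align*}
and the second summand simplifies exactly to $\tfrac12\Xi$ as defined in~\eqref{eq:floor}. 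This establishes the first claimed inequality.

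For the complexity statement I would split according to which branch of the $\min$ achieves the chosen stepsize. The additive floor contributes $\frac{\zeta^2}{2\mu(1-m)}$ unconditionally; the $\sigma^2$-term contributes $\frac{\gamma L\sigma^2}{2\mu(1-m)}$, so plugging in $\gamma\leq\frac{\epsilon\mu(1-m)+\zeta^2}{L\sigma^2}$ bounds it by $\frac{\epsilon}{2}+\frac{\zeta^2}{2\mu(1-m)}$. Hence the total floor is $\cO\bigl(\epsilon+\frac{\zeta^2}{\mu(1-m)}\bigr)$, as required. It then remains to force the exponentially decaying term $(1-\gamma\mu(1-m))^T F_0$ to be $\cO(\epsilon)$, for which the standard inequality $(1-x)^T\leq\exp(-xT)$ gives the sufficient condition $T\geq\frac{1}{\gamma\mu(1-m)}\log\frac{F_0}{\epsilon}$.

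Substituting the two candidate stepsizes separately gives the two summands: the bound $\gamma\leq\frac{1}{(M+1)L}$ yields $T=\tilde\cO\!\left(\frac{(M+1)\kappa}{1-m}\log\frac{1}{\epsilon}\right)$, while $\gamma\leq\frac{\epsilon\mu(1-m)+\zeta^2}{L\sigma^2}$ yields $T=\tilde\cO\!\left(\frac{\kappa\sigma^2}{(1-m)\,(\epsilon\mu(1-m)+\zeta^2)}\log\frac{1}{\epsilon}\right)$. Taking the minimum inside the stepsize corresponds to taking the sum of these two iteration counts, which rearranges to the advertised bound $T=\tilde\cO\!\left(\frac{\kappa}{1-m}\bigl((M+1)\log\frac{1}{\epsilon}+\frac{\sigma^2}{\epsilon\mu(1-m)+\zeta^2}\bigr)\right)$.

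The only mildly delicate step is the stepsize balancing: one might naively expect to need $\gamma\leq\frac{\epsilon\mu(1-m)}{L\sigma^2}$ to drive the noise contribution below $\epsilon$, but the floor $\frac{\zeta^2}{\mu(1-m)}$ is already part of the tolerated error, so the numerator can be enlarged to $\epsilon\mu(1-m)+\zeta^2$. This is what makes the second summand shrink when $\zeta^2$ is non-negligible and is the main place where care is needed; the rest of the argument is routine unrolling of a contractive recursion.
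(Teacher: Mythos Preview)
Your proposal is correct and follows essentially the same approach as the paper: both unroll the recursion~\eqref{eq:recusion} via the geometric series bound to obtain the first inequality, then do a two-case analysis on which branch of the $\min$ determines $\gamma$ to extract the iteration complexity. The only cosmetic difference is that the paper's proof actually works with the simpler stepsize $\gamma=\min\bigl\{\tfrac{1}{(M+1)L},\tfrac{\epsilon\mu(1-m)}{L\sigma^2}\bigr\}$ (without the $+\zeta^2$) and then remarks at the end that it suffices to take $\epsilon\geq\tfrac{\zeta^2}{2\mu(1-m)}$; you instead plug in the stepsize exactly as stated in the theorem and bound the noise floor directly, which is arguably cleaner and matches the statement more transparently.
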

This result shows, that SGD with constant stepsize converges up to the error floor $\cO(\zeta^2 + \gamma L \sigma^2)$, given in~\eqref{eq:floor}. The second term in this bound can be decreased by choosing the stepsize $\gamma$ small, however, the first term remains constant, regardless of the choice of the stepsize. We will investigate this error floor later in the experiments.

Without bias terms, we recover the best known rates under the PL condition~\cite{karimi16:pl}. Compared to the SGD convergence rate for $\mu$-strongly convex function our bound is worse by a factor of $\kappa$  compared to~\citep{unified_sgd_analisys}, but we consider convergence of the function value of the last iterate $F_T$ here. 

\begin{remark}[Convergence on Strongly Convex Functions.]
\label{rem:stronglyconvex}
Theorem~\ref{thm: convergence under PL condition} applies also to $\mu$-strongly convex functions (which are $\mu$-P\L{}) and shows convergence in function value for any $0 \leq m < 1$ (if $\zeta^2=0$). By $\mu$-strong convexity this implies iterate convergence $\norm{\xx_t-\xx^\star}^2 \leq \frac{2}{\mu} F_t$, however, at the expense of an additional $\frac{1}{\mu}$ factor. For proving a stronger result, for instance to prove a one-step progress bound for $\norm{\xx_t-\xx^\star}^2$ (as it standard in the unbiased case~\cite{Lacoste2012:simpler,unified_sgd_analisys}) it is necessary to impose a strong condition $m \leq \frac{1}{\kappa}$, similar as in~\cite{Hu2020:biasedspd}. 
\end{remark}

\subsection{Divergence on Convex Functions}
Albeit Theorem~\ref{thm:smooth} shows convergence of the gradient norms, this convergence does not imply convergence in function value in general (unless the gradient norm is related to the function values, as for instance guaranteed by~\eqref{eq:PL}).
We now give an example of a weakly-convex (non-strongly convex) smooth function, where Algorithm~\ref{algo: SGD_algorithm} can diverge.
\begin{example}\label{ex:huber}
Consider the Huber-loss function $h \colon \R \to \R$,
\begin{align*}
 h(x) = \begin{cases} \abs{x}, & \abs{x}>1 \\ \frac{1}{2}x^2 + \frac{1}{2} & \abs{x} \leq 1
    \end{cases}
\end{align*}
and define the gradient oracle $g(x)=h'(x)-2$. This is a biased oracle with $\zeta^2 \leq 4$, and it is easy to observe that iterations~\eqref{eq: def of g_t} diverge for any stepsize $\gamma > 0$, given $x_0 >1$.
\end{example}

This result shows that in general it is not possible to converge when $\zeta^2\neq 0$, and that the approximate solutions found by SGD with unbiased gradients (in Example~\ref{ex:huber} converging to 0) and by SGD with biased gradients can be arbitrarily far away.
It is an interesting open question, whether SGD can be made \emph{robust to biased updates}, i.e. preventing this diverging behaviour through modified updates (or stronger assumptions on the bias). We leave this for future work.

\section{Discussion of Examples}
\label{sec: special cases}
In this section we are going to look into some examples which can be considered as special cases of our biased gradient framework. Table \ref{tab: special cases} shows a summary of these examples together with value of the respective parameters. %
\begin{table*}[!t]
\caption{Special cases of our biased gradient framework. Each column represents the value of our framework parameters in an example. Here $\nn(\xx)$ is a $(M,\sigma^2)$-bounded noise term and $\uu$ isotropic Gaussian with variance 1.}
\label{tab: special cases}
\centering
\resizebox{0.95\linewidth}{!}{
\begin{tabular}{llccccc} 
\toprule
                      & def                                                                                    & M & $\sigma^2$ & 1-m & $\zeta^2$              \\ \midrule
top-$k$ compression & $\gg(\xx) = \topk(\nabla f(\xx))$  & $0$  & $0$ & $\frac{k}{d}$ & 0 \\   
random-$k$ compression & $\gg(\xx) = \randk(\nabla f(\xx))$   & $0 $ & 0 & $\frac{k}{d}$ & 0  \\
random-$k$ compression (of stochastic g.) & $\gg(\xx) = \randk(\nabla f(\xx) + \nn(\xx))$   & $ (1+M)\frac{d}{k} - 1$ & $\frac{k}{d} \sigma^{2}$ & $\frac{k}{d}$ & 0  \\
Gaussian smoothing & $\gg_{\rm GS}(\xx) = \frac{f(\xx+ \tau \uu) - f(\xx)}{\tau} \uu$                                                                         &  4(d+4)  &  $3\tau^2 L^2 (d+4)^3$          & 1  &    $\frac{\tau^2}{4}L^2(d+3)^{3}$                    \\
$(\delta, L)$-oracle  &  (see text)    & 0   &     0       & 1  &   $2\delta L$                     \\
 stochastic $(\delta, L)$-oracle  &  (see appendix)    & 0   &     $\sigma^2$       & 1  &   $2\delta L$                     \\
compressed gradient   & $\gg(\xx) = \mathcal{C}(\nabla f(\xx)), \quad \mathcal{C} \in C(\delta)$ &   0 & 0          & $ \delta$ & 0 \\ 
\bottomrule
\end{tabular}
}
\end{table*}

\subsection{Top-$k$ and Random-$k$ sparsification}
The well-known top-$k$ spasification operator~\cite{Dryden2016:topk,top-k_Alistarh_DBLP:conf/nips/AlistarhH0KKR18} that selects the $k$ largest coordinates of the gradient vector $\nabla f(\xx)$
is a biased gradient oracle with $m \leq \frac{d-k}{d}$ and $\zeta^2=0$. This means, that GD with top-$k$ compression converges to a stationary point on smooth functions and under the PL condition the convergence rate is $\cO \bigl( \frac{d}{k} \kappa \log \frac{1}{\epsilon} \bigr)$ when there is no additional noise. This recovers the rate of greedy coordinate descent when analyzed in the Euclidean norm~\cite{Nutini:2015vd}.

The biased random-$k$ sparsification operator  randomly selects $k$ out of the $d$ coordinates of the stochastic gradient, and sets the other entries to zero.
As $m=1-\frac{k}{d}$, this implies a $\frac{d}{k}$ slowdown over the corresponding rates of GD (recovering the rate of random coordinate descent~\cite{Nesterov:2017:RGM:3075317.3075511,biased_rand_kDBLP:journals/mp/RichtarikT16}) and
SGD, with asymptotic dominant term $\cO\bigl(\frac{d \sigma^2}{k}\bigr)$
that is expected~\citep[cf.][]{Chaturapruek2015:noise}. In the appendix we further discuss differences to \emph{unbiased} random-$k$ sparsification.\footnote{By rescaling the obtained sparse vector by the factor $\frac{d}{k}$ one obtains a \emph{unbiased} estimator (with higher variance). With these sparsification operators the dominant term in the rate remains  $\cO\bigl(\frac{d \sigma^2}{k}\bigr)$, but the optimization terms can be improved as a benefit of choosing larger stepsizes. However, this observation does not show a limitation of the biased rand-$k$ oracle, but is merely a consequence of applying our \emph{general} result  to this special case. Both algorithms are equivalent when properly tuning the stepsize.
}

\subsection{Zeroth-Order Gradient}
\label{subsec: ZO}
Next, we discuss the zeroth-order gradient oracle obtained by Gaussian smoothing~\citep[cf.][]{Nesterov:2017:RGM:3075317.3075511}. This oracle is defined as \cite{Polyak1987:book}:
\begin{equation}
    \label{eq: grad free oracle}
    \gg_{\rm GS}(\xx) = \frac{f(\xx+ \tau \uu) - f(\xx)}{\tau} \cdot \uu
\end{equation}
where $\tau > 0$ is a smoothing parameter and $\uu \sim \cN(\0,\mI)$ a random Gaussian vector.
\citet[Lemma 3 \& Theorem 4]{Nesterov:2017:RGM:3075317.3075511} provide estimates for the 
bias: %
 \begin{equation*}
     \norm{\mathbb{E}\gg_{\rm GS}(\xx) - \nabla f(\xx)}^2 \leq \frac{\tau^2}{4}L^2(d+3)^3\,,
 \end{equation*}
and the second moment:
    \begin{equation*}
    \begin{split}
    \label{eq: upper bound on 2nd moment of ZO}
    \mathbb{E} \norm{\gg_{GS} (\xx)}^2 & \leq 3\tau^{2}L^{2}(d+4)^{3} +  4(d+4) \norm{\E {\gg_{GS}(\xx)}}^{2}  \,.
    \end{split}
    \end{equation*}
We conclude %
that Assumptions \ref{assumption:noise}, and \ref{assumption: bias} hold with:
    \begin{align*}
        &\zeta^2 = \frac{\tau^2}{4}L^2(d+3)^3, \quad   m=0 \\ 
        &\sigma^2= 3\tau^2 L^2 (d+4)^3, \quad M= 4(d+4) \,.
    \end{align*}
\citet{Nesterov:2017:RGM:3075317.3075511} show that on smooth functions gradient-free oracles in general slow-down the rates by a factor of $\cO(d)$, similar as we observe here with $M=\cO(d)$. However, their oracle is stronger, for instance they can show convergence (up to $\cO(\zeta^2)$) on weakly-convex functions, which is not possible with our more general oracle (see Example~\ref{ex:huber}).

\subsection{Inexact gradient oracles}
\citet{devolder_inexact_oracle_DBLP:journals/mp/DevolderGN14} introduced the notion
of inexact gradient oracles. A $(\delta, L)$-gradient oracle for $\yy \in \R^d$ is a pair $(\tilde{f}(\yy),\tilde{\gg}(\yy))$ that satisfies $\forall \xx \in \R^d$:
\begin{align*}
    0 \leq f(\xx)-\left(\tilde{f}(\yy)+\lin{ \tilde{\gg}(\yy), \xx-\yy}\right) \leq \frac{L}{2}\norm{\xx-\yy}^{2}+\delta\,.
\end{align*}
We have by \cite{devolder_inexact_oracle_DBLP:journals/mp/DevolderGN14}:
\begin{align*}
    & \norm{\bb(\xx)}^2 = \norm{\nabla f(\xx) - \tilde{\gg}(\xx)}^2 \leq 2 \delta L \, %
\end{align*}
hence we can conclude: %
$\zeta^2 = 2\delta L$ and $m=0$.

It is important to observe that the notion of a $(\delta,L)$ oracle is stronger than what we consider in Assumption~\ref{assumption: bias}. For this, consider again the Huber-loss from Example~\ref{ex:huber} and a $(\delta,L)$ gradient estimator $\tilde{\gg}(x)$. For $x \to \infty$, we observe that it must hold $\norm{\tilde{\gg}(x)- \nabla h(x)} \to 0$, otherwise the condition of the $(\delta,L)$ oracle is violated. In contrast, the bias term in Assumption~\ref{assumption: bias} can be constant, regardless of $x$.
\citet[][preprint]{smooth_convex_opt_devolder_RePEc:cor:louvco:2011070} generalized the notion of $(\delta,L)$ oracles to the stochastic case, which we discuss in the appendix.

\subsection{Biased compression operators}
The notion of top-$k$ compressors has been generalized to arbitrary $\delta$-compressors, defined as
\begin{definition}[$\delta$-compressor]
$\mathcal{C} \in C\left(\delta\right)$ if $\exists \delta > 0$ s.t.:
\begin{align*}
    \mathbb{E} \left[\norm{\mathcal{C} \left( \gg \right) - \gg}^2\right] \leq \left(1 - \delta \right) \norm{\gg}^2, \quad \forall \gg \in \R^d\,.
\end{align*}
\end{definition}
This notion has for instance been used in \cite{biased_compression_DBLP:journals/corr/abs-2002-12410, sparsified_SGD_Stich:2018:SSM:3327345.3327357}.
For this class of operator it holds
\begin{align*}
    m \leq 1 - \delta , \qquad \zeta^2 = 0\,.
\end{align*}
In the noiseless case, our results show convergence for arbitrary small $\delta > 0$ ($m < 1)$ and we recover the rates given in \citep[Theorem 1]{biased_compression_DBLP:journals/corr/abs-2002-12410}.

\subsection{Delayed Gradients}
Another example of updates that can be viewed as biased gradients, are delayed gradients, for instance arising in asynchronous implementations of SGD~\cite{Recht2011:hogwild}. For instance \citet[Proof of Theorem 2]{Li2014:delay} provide a bound on the bias. See also~\cite{error_feedback_DBLP:journals/corr/abs-1909-05350} for a more refined analysis in the stochastic case. However, these bounds are not of the form as in Assumption~\ref{assumption: bias}, as they depend on past gradients, and not only the current gradient at $\xx_t$. We leave generalizations of Assumption~\ref{assumption: bias} for future work (moreover, delayed gradient methods have been analyzed tightly with other tools already).

\section{Experiments}
In this section we verify numerically whether our theoretical bounds are aligned with the numerical performance of SGD with biased gradient estimators.
In particular, we study whether the predicted error floor~\eqref{eq:floor} of the form $\cO(\zeta^2 + \gamma L \sigma^2)$ can be observed in practice.

\paragraph{Synthetic Problem Instance.}
We consider a simple least squares objective function $f(\xx) = \frac{1}{2} \norm{A\xx}_{2}^{2}$, in dimension $d=10$, where the matrix $A$ is chosen according to the Nesterov worst function \cite{nesterov_book_DBLP:books/sp/Nesterov04}. We choose this function since its Hessian, i.e., $A^T A$ has a very large condition number and hence it is a hard function to optimize. By the choice of $A$ this objective function is strongly convex. We control the stochastic noise by adding Gaussian noise $\nn_t \sim \cN(0, \sigma^2)$ to every gradient.

\subsection{Experiment with a synthetic additive bias}
\label{subsec: synthetic bias}
 In our first setup (Figure \ref{fig: constant_stepsize_biased}), we consider a synthetic case in which we add a constant value $\zeta \in \{0, 0.1, 0.001\}$ to the gradient to control the bias, and vary the noise $\sigma^2 \in \{0,1\}$. This corresponds to the setting where we have $m=0, \zeta^2 > 0$, $M=0$, $\sigma^2 \geq 0$. 
\textbf{Discussion of Results.} Figure \ref{fig: constant_stepsize_biased} highlights the effect of the parameters $\sigma^2$ and $\zeta^2$ on the rate of the convergence and the neighbourhood of the convergence. 
Notice that a small bias value can only affect the convergence neighborhood when there is no stochastic noise. In fact this matches the error floor we derived in \eqref{eq:floor}. When $\zeta^2$ is small compared to $\sigma^2$, i.e, the bias is insignificant compared to the noise, the second term in \eqref{eq:floor} determines the neighborhood of the optimal solution to which SGD with constant stepsize converges. In contrast, when $\zeta$ is large the first term in \eqref{eq:floor} dominates the second term and determines the neighborhood of the convergence. As mentioned earlier in Section \ref{subsec: convergence PL}, the second term in \eqref{eq:floor} also depends on the stepsize, meaning that by increasing (decreasing) the stepsize the effect of $\sigma^2$ in determining the convergence neighborhood can be increased (decreased).

\begin{figure}
    \centering
    \includegraphics[width=\factor\linewidth]{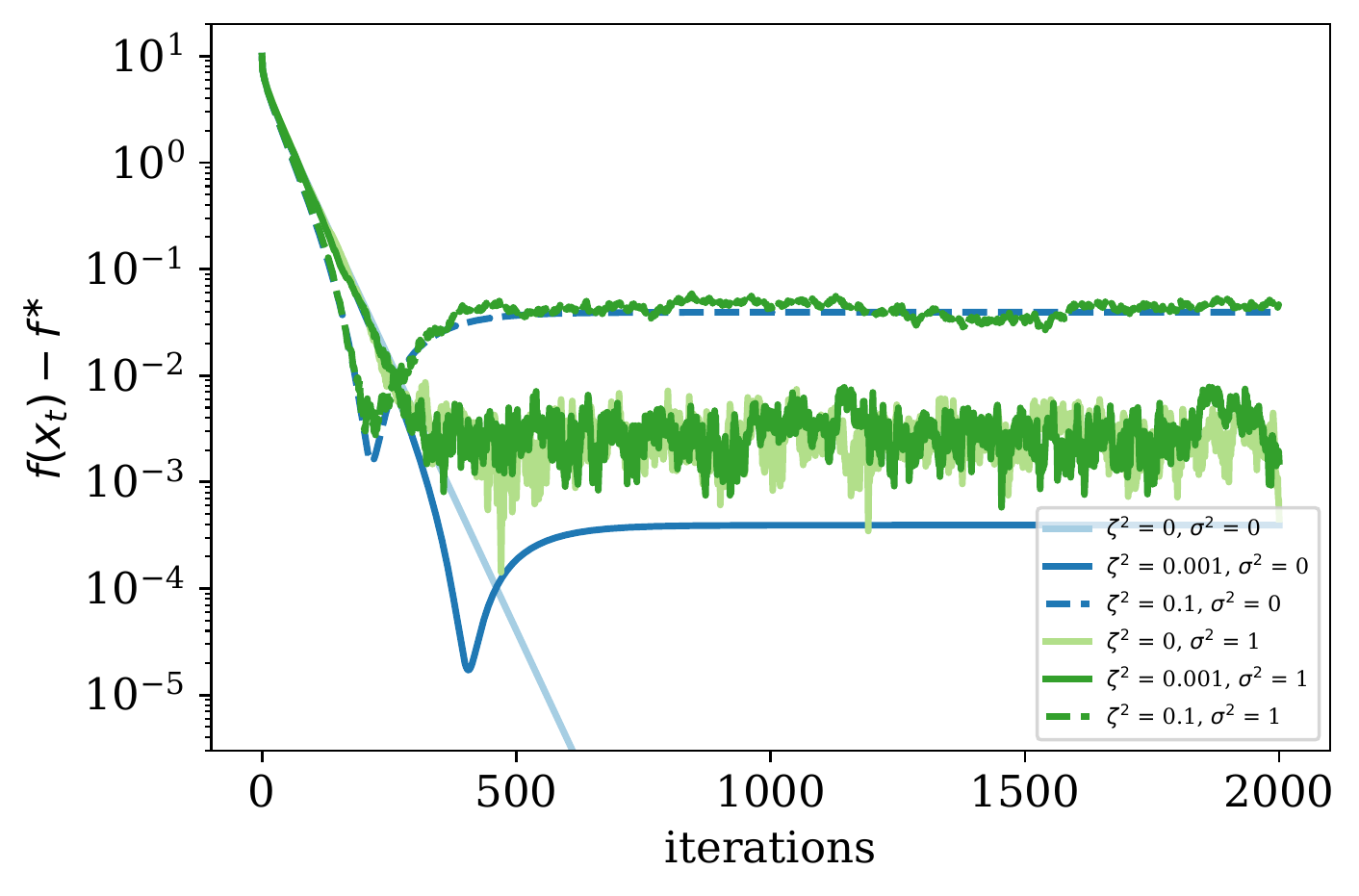}
    \caption{Effect of the parameters $\zeta$, and $\sigma$ on the error floor when optimizing $f(\xx), \xx \in \R^{10}$. Here we use a fixed stepsize $\gamma = 0.01$ and we set $M=m=0$.}
    \label{fig: constant_stepsize_biased}
\end{figure}

\begin{figure}
    \centering
    \includegraphics[width=\factor\linewidth]{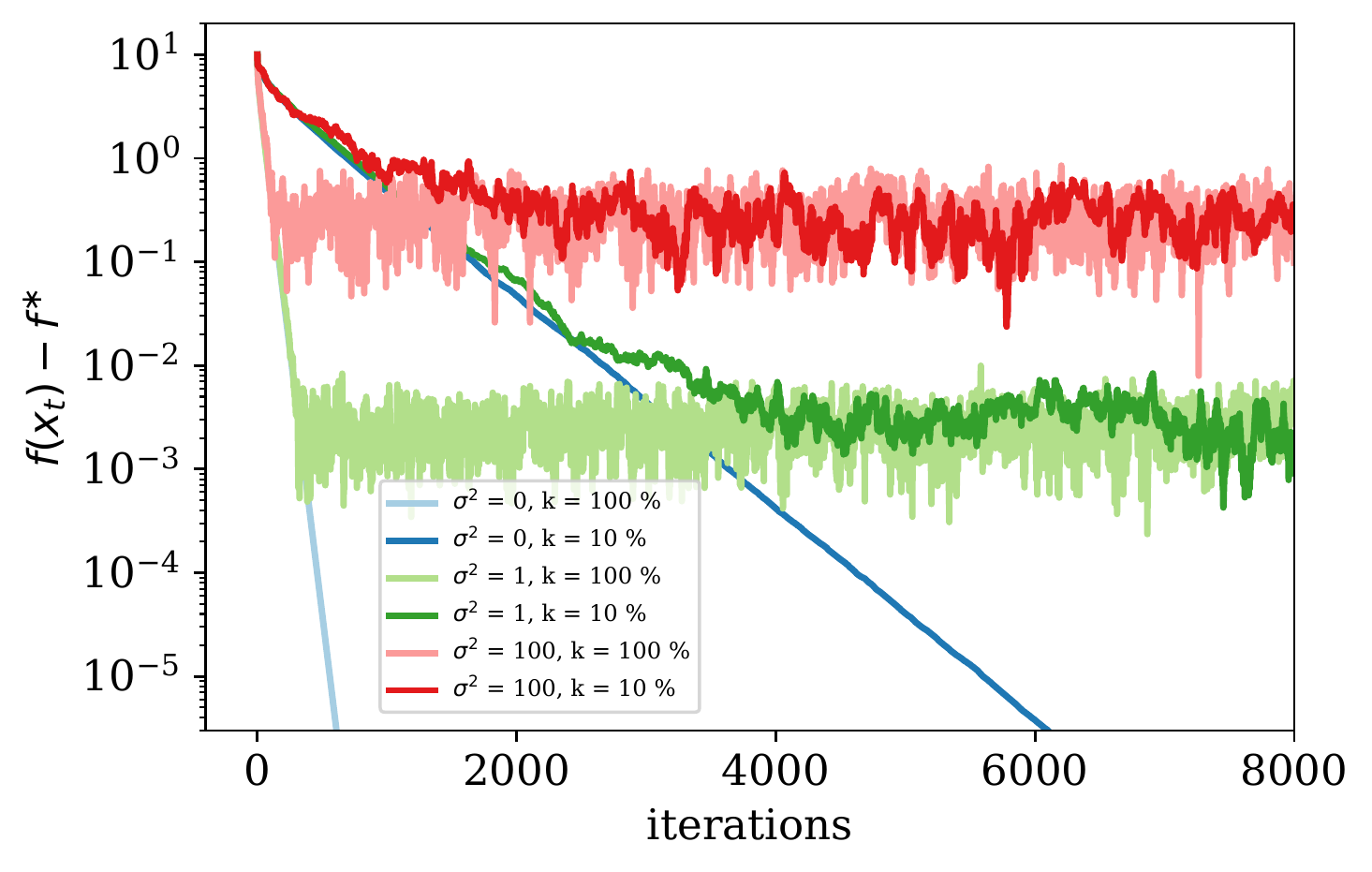}
    \caption{Effect of the parameters $\sigma^2$ and $k$ which change the noise and bias level respectively, for random-$k$ sparsification. Here we optimize $f(\xx), \xx \in \R^{10}$ and use a fixed stepsize $\gamma = 0.01$.}
    \label{fig: constant_stepsize_randk}
\end{figure}

\begin{figure}
    \centering
    \includegraphics[width=\factor\linewidth]{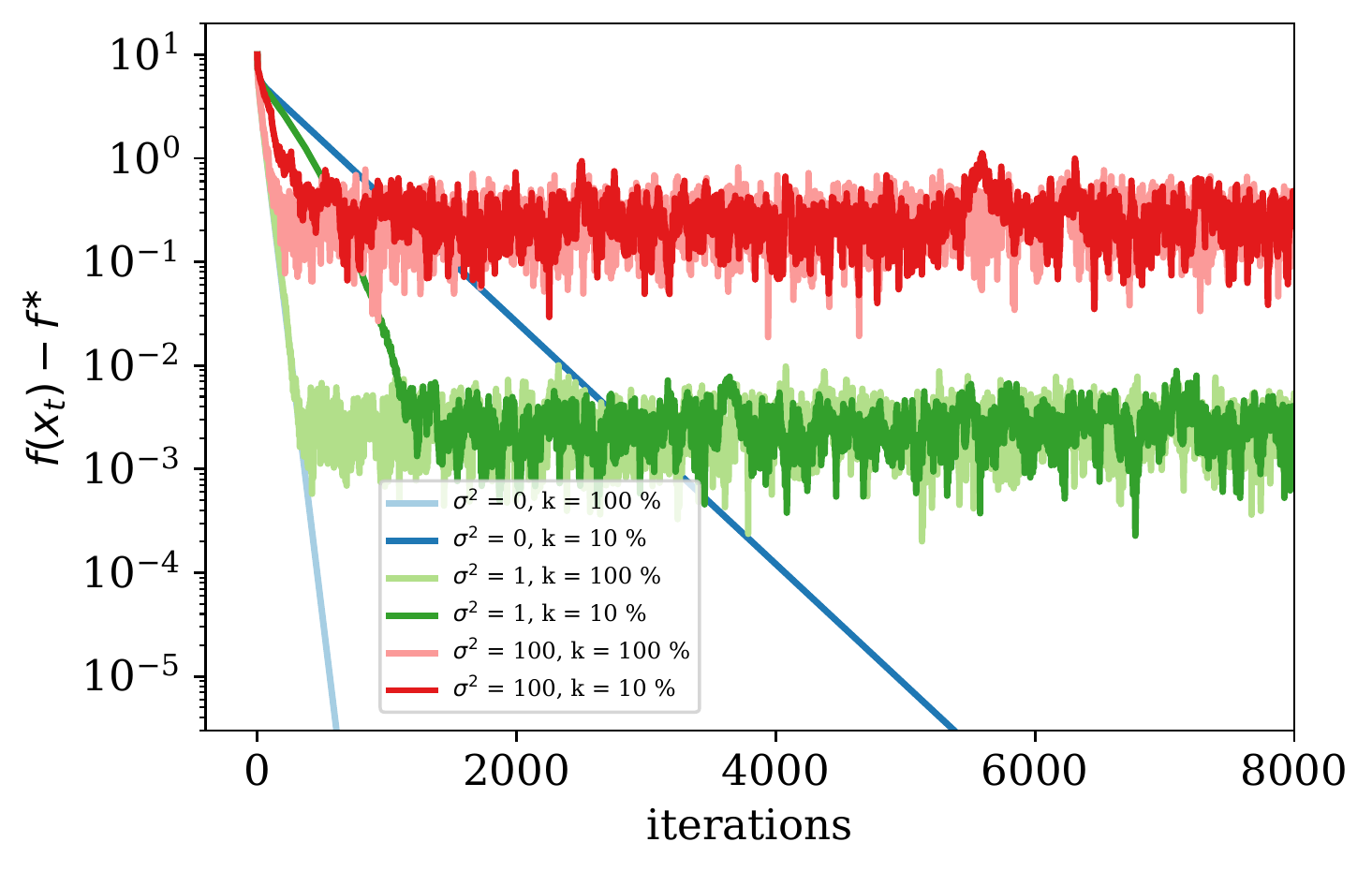}
    \caption{Effect of the parameters $\sigma^2$ and $k$ which change the noise and bias level respectively, for top-$k$ sparsification. Here we optimize $f(\xx), \xx \in \R^{10}$ and use a fixed stepsize $\gamma = 0.01$.}
\label{fig: constant_stepsize_topk}
\end{figure}

\subsection{Experiment with Top-$k$ and Random-$k$ sparsification}
In contrast to the previous section, we leave the completely synthetic setup, and consider structured bias that arises from compressing the gradients by using either the random-$k$ or top-$k$ method~\cite{top-k_Alistarh_DBLP:conf/nips/AlistarhH0KKR18}, displayed in Figures~\ref{fig: constant_stepsize_randk}, \ref{fig: constant_stepsize_topk}.
We use the same objective function $f$ as introduced earlier. In each iteration we compress the stochastic gradient, $\sigma^2 \in \{0,1,100\}$, by the top-$k$ or random-$k$ method, i.e.,
\begin{align*}
    \gg(\xx) = \cC (\nabla f(\xx) + n(\xx)), \quad n(\xx) \sim \cN (0, \sigma^2)
\end{align*}
where $\cC$ is either $\topk$ or $\randk$ operator, for $\frac{k}{d} \in \{0.1,1\}$.
As we showed in Section~\ref{sec: special cases}, this corresponds to the setting $\zeta^2=0$ for rand-$k$, and $\zeta^2>0$ for top-$k$, with $m=1-\frac{k}{d}$.

\textbf{Error floor.} Figure \ref{fig: constant_stepsize_randk}, \ref{fig: constant_stepsize_topk} show the effect of the parameters $\sigma^2$ and $k$ (that determine the level of noise and bias respectively) on the rate of convergence as well as the error floor, when using random-$k$ and top-$k$ compressors and a constant stepsize $\gamma$.

\textbf{Convergence rate.} 
In Figure \ref{fig: six graphs} we show the convergence for different noise levels $\sigma^2$ and compression parameters $k$. For each configuration and algorithm, we tuned the stepsize $\gamma$ to reach the target accuracy (set by $\epsilon$) in the fewest number of iterations as possible.

\paragraph{Discussion of Results.} 
 We can see that in this case even in the presence of stochastic noise, random-$k$ and top-$k$, although with a slower rate, can still converge to the same level as with no compression.
We note that top-$k$ consistently converges faster than rand-$k$, for the same values of $k$.
The error floor for both schemes is mainly determined by the strength of the noise $\sigma^2$ that dominates in~\eqref{eq:floor}.
In Figure \ref{fig: six graphs} we see that top-$k$ and random-$k$ can still converge to the same error level as no compression scheme but with a slower rate.
The convergence rate of the compression methods become slower when decreasing the compression parameter $k$, matching the rate we derived in Theorem \ref{thm: convergence under PL condition}.

\begin{figure}
    \centering
    \includegraphics[width=\factor\linewidth]{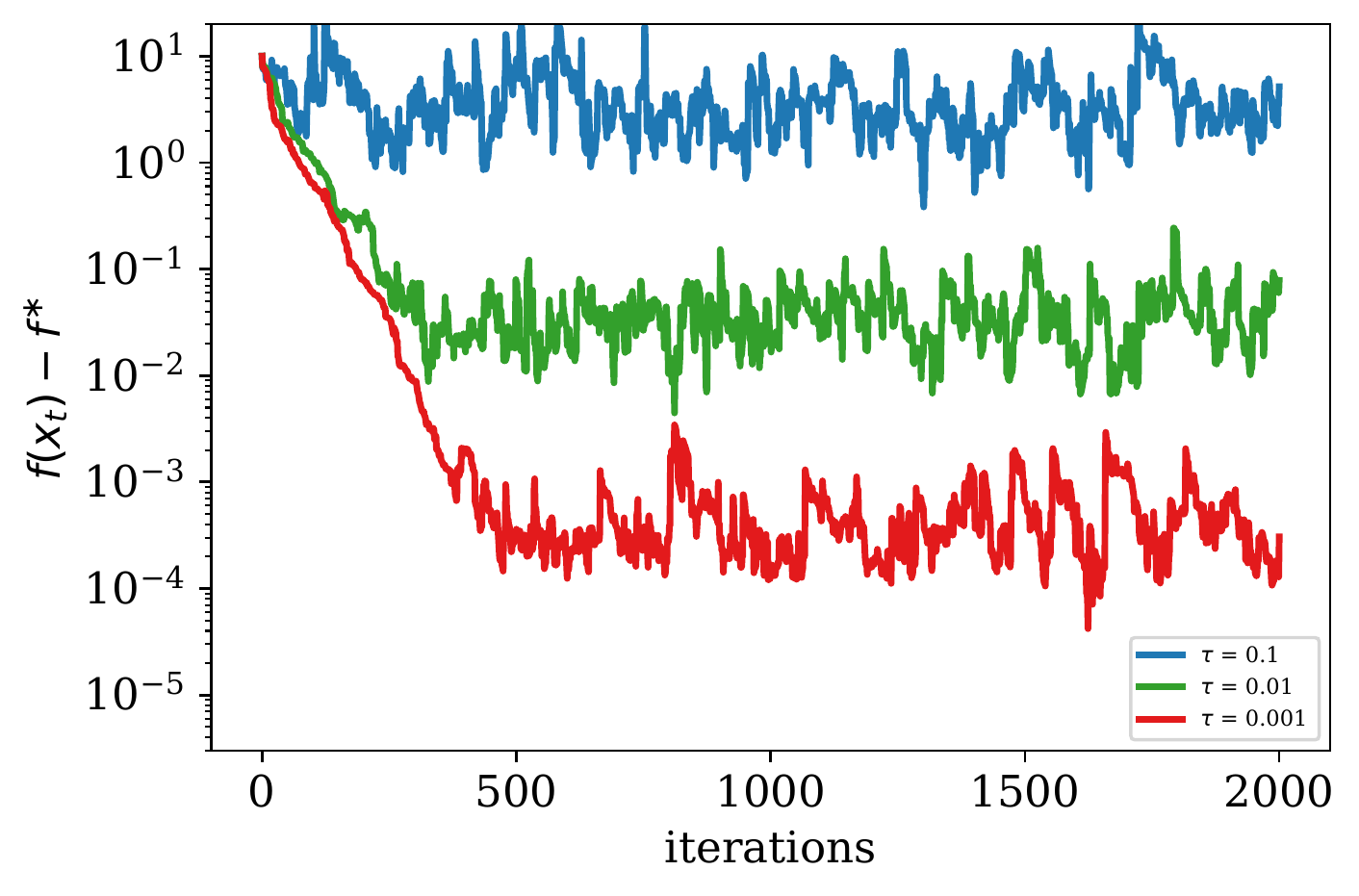}
    \caption{Effect of the parameter $\tau$ (the smoothing parameter) on the error floor, when zeroth-order gradient is used. Here we optimize $f(\xx), \xx \in \R^{10}$ and use a fixed stepsize $\gamma = 0.01$.}
\label{fig: constant_stepsize_ZO}
\end{figure}

\begin{figure*}
    \centering
    \includegraphics[width=\factor\textwidth]{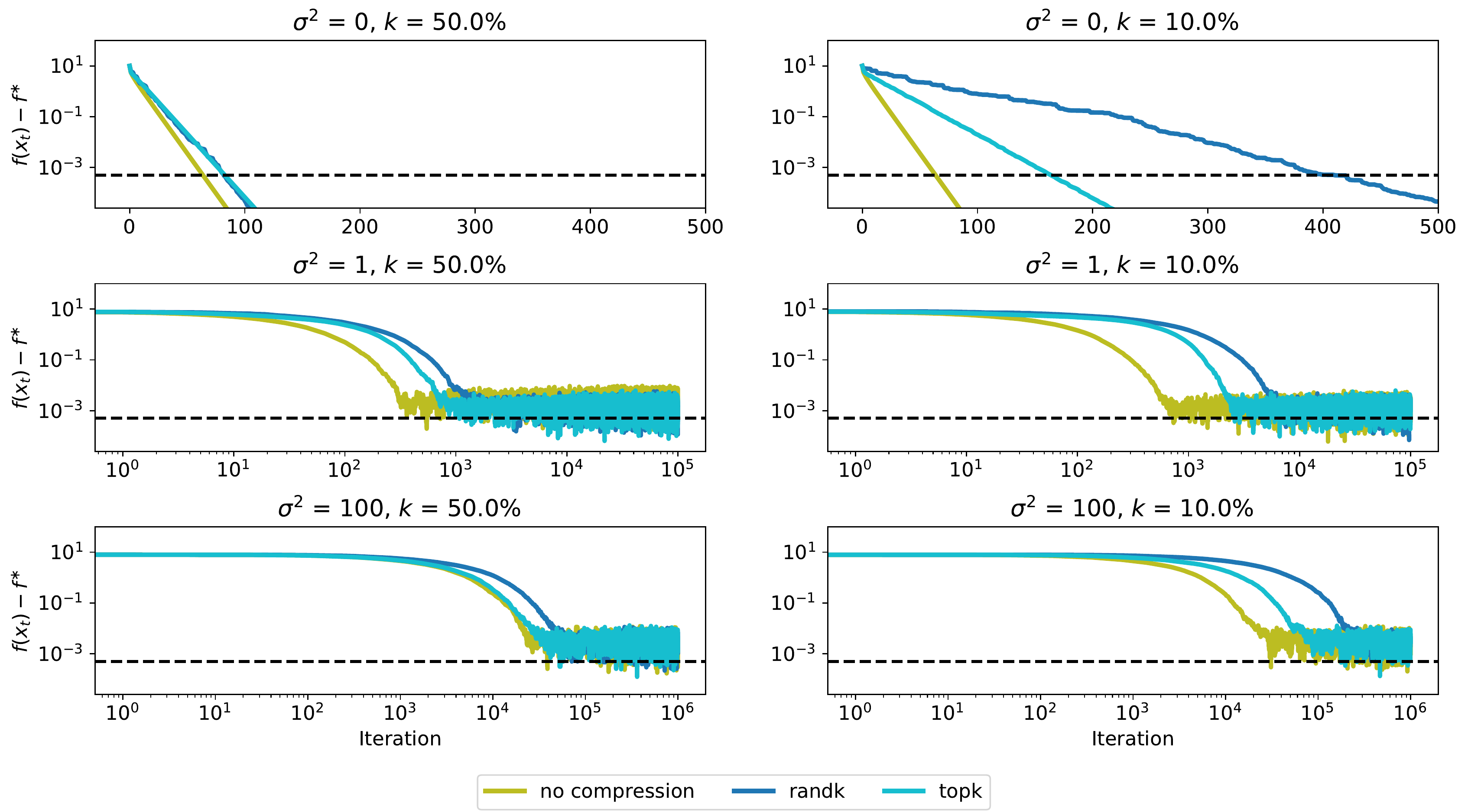}
    \caption{Convergence of $f(\xx)=\frac{1}{2}\norm{Ax}^2, \quad \xx \in \mathbb{R}^{10}$ to target accuracy $\epsilon = 5 \times 10^{-4}$ for different problem settings ($\sigma^2$ increasing to the bottom and $k$ decreasing to the left), and different compression methods. Stepsizes were tuned for each setting individually to reach target accuracy in as few iterations as possible.}
    \label{fig: six graphs}
\end{figure*}

\subsection{Experiment with zeroth-order gradient oracle}
We optimize the function $f$ using the zeroth-order gradient oracle defined in Section \ref{subsec: ZO}. The smoothing parameter $\tau$  affects both the values of $\sigma^2$ and $\zeta^2$ and hence the error floor. \looseness=-1

\textbf{Discussion of Results.} 
Figure \ref{fig: constant_stepsize_ZO} shows the error floor for the zeroth-order gradient method with different values of $\tau$. We can observe that the difference between the error floors matches what we saw in theory, i.e, $\sigma^2 + \zeta^2 \sim \cO(\tau^2)$.

\begin{figure}
    \centering
    \includegraphics[width=\factor\linewidth]{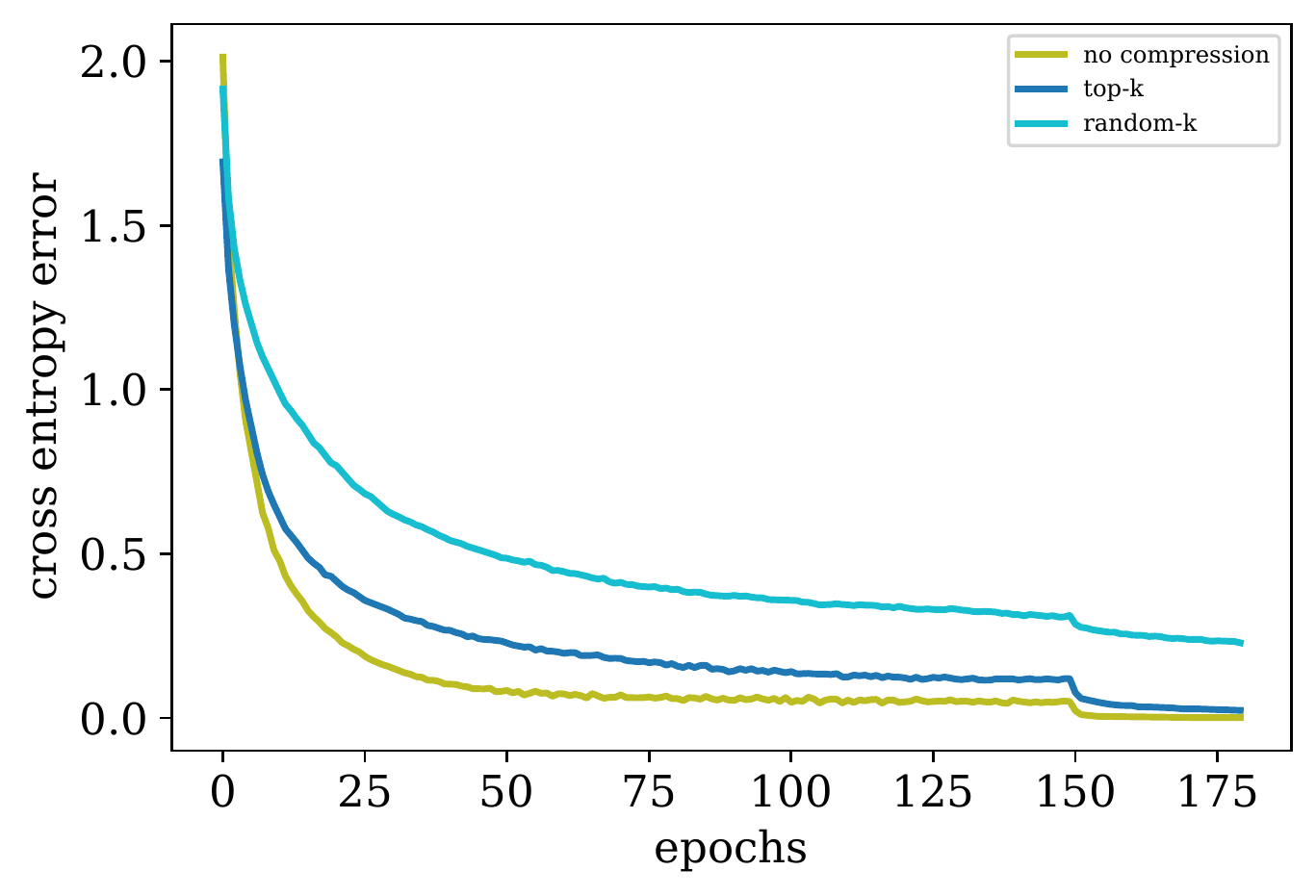}
    \caption{Training error when training ResNet18 on CIFAR-10 dataset. For the compression methods, only $1 \%$ of the coordinates were kept, i.e, $k = 0.01 \times d$.}
    \label{fig: DL_experiment_train}
\end{figure}

\subsection{DL Experiment on CIFAR-10 on Resnet18}
\label{subsec: DL}
We use the PyTorch framework \cite{pytorch_NEURIPS2019_9015} to empirically compare top-$k$ and random-$k$ with SGD without compression for training Resnet+BN+Dropout (Resnet18) \cite{Resnet_conf/cvpr/HeZRS16} on the CIFAR-10 dataset \cite{cifar-10article}.
Following the standard practice in compression schemes \cite{Deep_grad_comp_conf/iclr/LinHM0D18, ATOMO_conf/nips/WangSLCPW18}, we apply the compression layer-wise. %
We use SGD with momentum and we tune the momentum parameter $\beta$ and the learning rate separately for each setting (see Appendix~\ref{sec:parameters}). Although SGD with momentum is not covered by our analysis, we use this method since it is the state-of-the-art training scheme in deep learning and required to reach a good accuracy. In each setting we trained the neural network for 180 epochs and report the cross entropy error for the training data. \looseness=-1 %

\textbf{Discussion of results.} Figure \ref{fig: DL_experiment_train} shows the convergence. %
We can observe that similar to Figures  \ref{fig: constant_stepsize_randk} and \ref{fig: constant_stepsize_topk}, top-$k$ can still converge to the same error floor as SGD but with a slower rate.
Rand-$k$ is more severely impacted by the $\cO(\frac{d}{k})$ slowdown predicted by theory and would require more iterations to converge to higher accuracy. \looseness=-1

\section{Conclusion}

We provide a convergence analysis of SGD with biased gradient oracles. We reconcile insights that have been accumulated previously, explicitly or implicitly, by presenting a clean framework for the analysis of a general class of biased SGD methods.
We show concisely the influence of the bias and highlight important cases where SGD \emph{converges with biased oracles}. The simplicity of the framework will allow these findings to be broadly integrated, and the limitations we have identified may inspire targeted research efforts that can increase the robustness of SGD.

\clearpage
{\small 
\bibliographystyle{myplainnat}
\bibliography{relevant_papers}
}

\clearpage
\onecolumn
\appendix

\section{Deferred Proofs for Section~\ref{sec: biased gradient framework}}
\subsection{Proof of Lemma~\ref{lemma:decent-smoothness}}

By the quadratic upper bound~\eqref{eq: smoothness_equivalent_def} and Assumption~\ref{assumption:noise}:
\begin{align*}
    \E{f(\xx_{t+1})} & \leq
    f(\xx_t) - \gamma \lin{\nabla f(\xx_t), \E{\gg_t}} + \frac{\gamma^2 L}{2} \left(\E{\norm{\gg_t - \E{\gg_t}}^2} + \E{\norm{\E{\gg_t}}^2} \right) \\
    &= f(\xx_t) - \gamma \lin{\nabla f(\xx_t), \nabla f(\xx_t) + \bb_t} + \frac{\gamma^2 L}{2} \left(\E{\norm{\nn_t}^2} + \E{\norm{\nabla f(\xx_t) + \bb_t}^2} \right) \\
   &\leq f(\xx_t) - \gamma \lin{\nabla f(\xx_t), \nabla f(\xx_t) + \bb_t} + \frac{\gamma^2 L}{2} \left((M+1)\E{\norm{\nabla f(\xx_t) + \bb_t}^2} + \sigma^2 \right)
\end{align*}
By the choice of the stepsize, $\gamma \leq \frac{1}{(M+1)L}$, and Assumption~\ref{assumption: bias}:
\begin{align*}
    \E{f(\xx_{t+1})} &\leq f(\xx_t) + \frac{\gamma}{2} \left( -2 \lin{\nabla f(\xx_t),  \nabla f(\xx_t) + \bb_t} + \norm {\nabla f(\xx_t) + \bb_t}^2 \right) + \frac{\gamma^2L}{2} \sigma^2  \\
    &= f(\xx_t) + \frac{\gamma}{2} \left( - \norm{\nabla f(\xx_t)}^2 +  \norm{\bb_t}^2 \right)  + \frac{\gamma^2L}{2} \sigma^2  \\
    &\leq f(\xx_t) + \frac{\gamma}{2} \left(m-1 \right)\norm{\nabla f(\xx_t)}^2  + \frac{\gamma}{2}\zeta^2 + \frac{\gamma^2L}{2} \sigma^2\,.
\end{align*}
This concludes the proof. \hfill $\Box$

\subsection{Proof of Theorem~\ref{thm:smooth}}
By Lemma~\ref{lemma:new} we have that for any $T \geq 1$ and $\gamma \leq \frac{1}{(M+1)L}$ that it holds
\begin{align*}
 \Psi_{T} \leq \left(\frac{2F}{T\gamma (1-m)} + \frac{\gamma L \sigma^2}{1-m} \right) + \frac{\zeta^2}{1-m} =: \Theta(T,\gamma) \,,
\end{align*}
where we denote the right hand side by $\Theta(T,\gamma)$.

By choosing $T = \max \left\{ \frac{4(M+1)FL}{\epsilon(1-m)}, \frac{8 FL\sigma^2}{\epsilon^2(1-m)^2} \right\}$ and $\gamma=\min \left\{ \frac{1}{(M+1)L}, \frac{\epsilon (1-m)}{2 L \sigma^2} \right\}$, we observe $\Theta(T,\gamma) \leq \epsilon + \frac{\zeta^2}{1-m}$.
To see this, assume first that the minimum is attained for $\gamma = \frac{\epsilon(1-m)}{2 L \sigma^2}$. Then we verify
\begin{align*}
 \Theta(T,\gamma) = \frac{4FL\sigma^2}{T \epsilon (1-m)^2}  + \frac{\epsilon}{2} + \frac{\zeta^2}{1-m}  \leq \epsilon + \frac{\zeta^2}{1-m}
\end{align*}
by the choice of $T$. Similarly, if the minimum is attained for $\gamma = \frac{1}{(M+1)L}$, then $\frac{\sigma^2}{(M+1)(1-m)} \leq \frac{\epsilon}{2}$ and
\begin{align*}
 \Theta(T,\gamma) = \frac{2(M+1)FL}{T(1-m)} + \frac{\sigma^2}{(M+1)(1-m)} + \frac{\zeta^2}{1-m} \leq \frac{\epsilon}{2} + \frac{\epsilon}{2} + \frac{\zeta^2}{1-m}\,.
\end{align*}
Finally, it remains to note that we do not need to choose $\epsilon$ smaller than $\frac{\zeta^2}{1-m}$,  that is, we cannot get closer than $\frac{\zeta^2}{1-m}$ to the optimal solution. \hfill $\Box$

\subsection{Proof of Theorem~\ref{thm: convergence under PL condition}}
Starting from Equation~\eqref{eq:recusion}, we proceed by unrolling the recursion and using that $\sum_{i=0}^T (1-a)^i \leq \sum_{i=0}^\infty (1-a)^i = \frac{1}{a}$ for $0 < a < 1$:
\begin{align*}
    F_{T} \leq  \left(1-\gamma \mu(1-m) \right)^T F_0 + \frac{ \zeta^2}{2\mu (1-m)} + \frac{\gamma L \sigma^2}{2\mu (1-m)}\,. %
\end{align*}
Choose the stepsize $\gamma = \min\bigl\{\frac{1}{(M+1)L}, \frac{\epsilon \mu (1-m)}{L \sigma^2} \bigr\}$ and $T = \max\left\{ \frac{(M+1)L}{\mu (1-m)} \log\frac{2F_0}{\epsilon}, \frac{L\sigma^2}{\epsilon \mu^2 (1-m)^2} \log \frac{2F_0}{\epsilon} \right\}$. Then we verify:

If the minimum is attained for $\gamma= \frac{1}{(M+1)L}$, then $\frac{\sigma^2}{\mu (1-m)(M+1)} \leq \epsilon$, and it holds
\begin{align*}
 F_T \leq \left(1- \frac{\mu(1-m)}{(M+1)L} \right)^T F_0 + \frac{\sigma^2}{2\mu(1-m)(M+1)} + \frac{\zeta^2}{2 \mu (1-m)} \leq \frac{\epsilon}{2} + \frac{\epsilon}{2} + \frac{\zeta^2}{2 \mu (1-m)}
\end{align*}
by the choice of $T$.
Otherwise, if $\gamma= \frac{\epsilon \mu(1-m)}{L \sigma^2}$ then
\begin{align*}
 F_T \leq \left(1- \frac{\epsilon \mu^2(1-m)^2}{L\sigma^2} \right)^T F_0  + \frac{\epsilon}{2} + \frac{\zeta^2}{2 \mu (1-m)} \leq \epsilon +  \frac{\zeta^2}{2 \mu (1-m)}
\end{align*}
The claim follows again by observing that it suffices to consider $\epsilon \geq \frac{\zeta^2}{2\mu(1-m)}$ only. \hfill $\Box$.

\section{Deferred Proofs for Section~\ref{sec: special cases}}

\subsection{Top-$k$ sparsification}
\label{sec:topk}

\begin{definition}
For a parameter $1\leq k \leq d$, the operator $\topk : \R^d \to \R^d$ is defined for $\xx \in \R^d$ as:
\begin{align*}
    \left(\topk(\mathbf{x})\right)_{i}:=\left\{\begin{array}{ll}
(\mathbf{x})_{\pi(i)}, & \text { if } i \leq k \\
0 & \text { otherwise }
\end{array}\right.
\end{align*}
where $\pi$ is a permutation of $[d]$ such that $(\abs{\xx})_{\pi(i)} \leq (\abs{\xx})_{\pi(i+1)}$ for $i=1, \ldots, d-1$. In other words, the $\topk$ operator keeps only the $k$ largest elements of a vector and truncates the other ones to zero.
\end{definition}

We observe that for the $\topk$ operator, we have
\begin{align}
   0 \leq  \norm{\topk(\xx) - \xx}^2 \leq \frac{d-k}{d} \norm{\xx}^2\, \qquad \forall \xx \in \R^d\,, \label{eq:topk}
\end{align}
and both inequalities can be tight in general.

\begin{lemma}
Let $f \colon \R^d \to \R$ we differentiable, then 
$\topk(\nabla f(\xx))$ is a $(m,\zeta^2)$-biased gradient oracle for
\begin{align*}
m &= \left(1-\frac{k}{d}\right)\,, &
 \zeta^2 &= 0 \,.
\end{align*}
\end{lemma}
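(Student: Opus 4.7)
The plan is to recognize that the top-$k$ operator is deterministic, so I will identify the noise term $\nn(\xx,\xi)$ as identically zero and the bias term as $\bb(\xx) = \topk(\nabla f(\xx)) - \nabla f(\xx)$. With this decomposition the claim reduces to verifying the inequality of Assumption~\ref{assumption: bias}, namely
\[
\norm{\topk(\nabla f(\xx)) - \nabla f(\xx)}^2 \leq \Bigl(1-\tfrac{k}{d}\Bigr)\norm{\nabla f(\xx)}^2,
\]
and checking that this is exactly the statement of~\eqref{eq:topk} applied with the vector $\nabla f(\xx) \in \R^d$. So the only substantive content is the sparsification inequality~\eqref{eq:topk} itself.

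To prove~\eqref{eq:topk}, I would fix $\xx \in \R^d$, let $S \subseteq [d]$ be the set of indices selected by $\topk$, and $S^c$ its complement with $|S|=k$, $|S^c|=d-k$. Then $\norm{\topk(\xx)-\xx}^2 = \sum_{i \in S^c} x_i^2$ while $\norm{\xx}^2 = \sum_{i \in S} x_i^2 + \sum_{i \in S^c} x_i^2$. After rearrangement, the target inequality is equivalent to
\[
k \sum_{i \in S^c} x_i^2 \leq (d-k) \sum_{i \in S} x_i^2.
\]

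The key combinatorial step, which is the main (and only) calculation, is to exploit that $S$ contains the $k$ entries of largest magnitude: $\min_{i \in S} x_i^2 \geq \max_{j \in S^c} x_j^2$. Averaging on each side,
\[
\tfrac{1}{k}\sum_{i \in S} x_i^2 \geq \min_{i \in S} x_i^2 \geq \max_{j \in S^c} x_j^2 \geq \tfrac{1}{d-k}\sum_{j \in S^c} x_j^2,
\]
which rearranges to exactly the bound above. I do not anticipate a real obstacle; the only care required is handling the edge cases $k=d$ (bias vanishes) and $k=0$ (bound becomes trivial), and keeping track of absolute-value vs.\ squared orderings, which is automatic since $a \geq b \geq 0$ iff $a^2 \geq b^2$ in this setting. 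The conclusion $m = 1-k/d$ and $\zeta^2 = 0$ then follows by reading off the coefficients from the inequality, and the fact that $\gg = \nabla f + \bb$ with $\nn \equiv 0$ trivially satisfies the zero-mean noise condition of Definition~\ref{def: biased gradient oracle}.
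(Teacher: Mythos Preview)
Your proposal is correct and follows the same approach as the paper: the paper's proof consists of the single line ``This follows directly from~\eqref{eq:topk},'' and you identify exactly this reduction while additionally supplying a clean proof of the inequality~\eqref{eq:topk} itself (which the paper merely states as an observation). Your averaging argument for~\eqref{eq:topk} is standard and correct.
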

\begin{proof}
This follows directly from~\eqref{eq:topk}.
\end{proof}

\subsection{Random-$k$ sparsification}
\begin{definition}
For a parameter $1\leq k \leq d$, the operator $\randk : \R^d \times \Omega_k \to \R^d$, where $\Omega_k$ denotes the set of all $k$ element subsets of $[d]$, is defined for $\xx \in \R^d$ as:
\begin{align*}
    \left(\randk(\mathbf{x}, \omega)\right)_{i}:=\left\{\begin{array}{ll}
(\mathbf{x})_{i}, & \text { if } i \in \omega \\
0 & \text { otherwise }
\end{array}\right.
\end{align*}
where $\omega$ is a set of $k$ elements chosen uniformly at random from $\Omega_l$, i.e, $\omega \sim_{\text{u.a.r}} \Omega_k$.
\end{definition}

We observe that for the $\randk$ operator, we have
\begin{align}
  \E \norm{\randk(\xx) - \xx}^2 &= \frac{d-k}{d} \norm{\xx}^2\,, &
  \E \randk(\xx) &= \frac{k}{d} \xx \,, &
  &\forall \xx \in \R^d\,, \label{eq:randk}
\end{align}
and further
\begin{align}
 \E \randk(\aa+\bb) =\E \randk(\aa) + \E \randk(\bb) \label{eq:randklinear}
\end{align}
for any vectors $\aa,\bb \in \R^d$. Moreover
\begin{align}
 \norm{\E \randk(\xx)}^2 &= \frac{k^2}{d^2} \norm{\xx}^2 &
 \E \norm{\randk(\xx)}^2 &= \frac{k}{d} \norm{\xx}^2 \,.
\end{align}

\begin{lemma}
Let $\gg(\xx)$ be an unbiased gradient oracle with $(M_b,\sigma_b^2)$-bounded noise. Then $\randk(\gg(\xx))$ is a $(m,\zeta^2)$-biased gradient oracle with $(M,\sigma^2)$-bounded noise, for
\begin{align*}
 m &= \left(1-\frac{k}{d}\right)\,, &
 \zeta^2 &= 0\,,   &
 M &= \left((1+M_b) \frac{d}{k}-1 \right)\,, &
 \sigma^2 &= \frac{k}{d} \sigma_b^2\,.
\end{align*}
\end{lemma}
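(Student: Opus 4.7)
The plan is to verify the two properties separately using the tower property of expectation (conditioning on the inner randomness of $\gg$ and then averaging over the random coordinate set $\omega$). I will treat $\tilde\gg(\xx) := \randk(\gg(\xx))$ as the new oracle and extract its bias and noise components from Definition~\ref{def: biased gradient oracle}.

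First, for the bias: since $\randk$ is linear in its vector argument and $\E_\omega[\randk(\vv)] = \frac{k}{d}\vv$ by~\eqref{eq:randk}, I would condition on $\gg(\xx)$ and use unbiasedness of $\gg$ to compute $\E[\tilde\gg(\xx)] = \frac{k}{d}\nabla f(\xx)$. This gives $\bb(\xx) = -(1-\frac{k}{d})\nabla f(\xx)$, so $\|\bb(\xx)\|^2 = (1-\frac{k}{d})^2\|\nabla f(\xx)\|^2 \leq (1-\frac{k}{d})\|\nabla f(\xx)\|^2$, yielding $m = 1-\frac{k}{d}$ and $\zeta^2=0$.

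Second, for the noise, define $\tilde\nn(\xx,\xi) := \tilde\gg(\xx) - \E\tilde\gg(\xx) = \randk(\gg(\xx)) - \frac{k}{d}\nabla f(\xx)$. Expanding the square and using that $\E\langle\randk(\gg(\xx)),\nabla f(\xx)\rangle = \frac{k}{d}\|\nabla f(\xx)\|^2$, one obtains
\begin{equation*}
\E\|\tilde\nn(\xx,\xi)\|^2 = \E\|\randk(\gg(\xx))\|^2 - \tfrac{k^2}{d^2}\|\nabla f(\xx)\|^2.
\end{equation*}
For the first term on the right, I will apply the elementary identity $\E_\omega\|\randk(\vv)\|^2 = \frac{k}{d}\|\vv\|^2$ (coordinate-wise, since each index enters $\omega$ with probability $\frac{k}{d}$), and then use unbiasedness of $\gg$ together with the $(M_b,\sigma_b^2)$-bounded noise assumption to conclude $\E\|\gg(\xx)\|^2 = \|\nabla f(\xx)\|^2 + \E\|\gg(\xx)-\nabla f(\xx)\|^2 \leq (1+M_b)\|\nabla f(\xx)\|^2 + \sigma_b^2$.

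Combining these two estimates yields $\E\|\tilde\nn(\xx,\xi)\|^2 \leq \bigl[\frac{k(1+M_b)}{d} - \frac{k^2}{d^2}\bigr]\|\nabla f(\xx)\|^2 + \frac{k}{d}\sigma_b^2$, and factoring $\frac{k^2}{d^2}$ out of the coefficient of $\|\nabla f(\xx)\|^2$ produces exactly $M\cdot\frac{k^2}{d^2}\|\nabla f(\xx)\|^2 = M\|\nabla f(\xx)+\bb(\xx)\|^2$ with $M = (1+M_b)\frac{d}{k}-1$ and $\sigma^2 = \frac{k}{d}\sigma_b^2$, matching Assumption~\ref{assumption:noise}. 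There is no genuine obstacle here; the only subtlety is keeping the two sources of randomness cleanly separated and recognizing that the cross term vanishes because the inner expectation over $\omega$ rescales $\gg(\xx)$ by the scalar $\frac{k}{d}$ before taking the inner product with $\nabla f(\xx)$.
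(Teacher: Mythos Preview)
Your proposal is correct and follows essentially the same route as the paper: both arguments use the bias--variance decomposition $\E\norm{X}^2 = \norm{\E X}^2 + \E\norm{X-\E X}^2$, the identities $\E_\omega\randk(\vv)=\tfrac{k}{d}\vv$ and $\E_\omega\norm{\randk(\vv)}^2=\tfrac{k}{d}\norm{\vv}^2$, and then rewrite the bound in terms of $\norm{\nabla f(\xx)+\bb(\xx)}^2=\tfrac{k^2}{d^2}\norm{\nabla f(\xx)}^2$. The only cosmetic difference is that you apply $\E_\omega\norm{\randk(\cdot)}^2=\tfrac{k}{d}\norm{\cdot}^2$ directly to $\gg(\xx)$ and then bound $\E\norm{\gg(\xx)}^2$, whereas the paper first splits $\randk(\gg)=\randk(\nabla f)+\randk(\nn)$ and argues the cross term vanishes; your ordering is slightly more economical but not substantively different.
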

\begin{proof}
The gradient oracle can be written as $\gg(\xx)=\nabla f(\xx)+ \nn(\xx)$, for $(M_b,\sigma_b^2)$-bounded noise. We first estimate the bias
\begin{align}
 \norm{\E {\randk(\gg(\xx))} - \nabla f(\xx)}^2 &=
 \norm{\E {\randk(\nabla f(\xx)+\nn(\xx))} - \nabla f(\xx)}^2  \notag \\
 &\stackrel{\eqref{eq:randklinear}}{=} \norm{ \E \randk (\nabla f(\xx)) +  \E \randk(\nn(\xx)) - \nabla f(\xx) }^2  \notag \\
 & =  \norm{\E \randk(\nabla f(\xx)) -\nabla f(\xx)}^2 \notag \\
 &\stackrel{\eqref{eq:randk}}{=}  \left(1-\frac{k}{d}\right) \norm{\nabla f(\xx)}^2 \notag
\end{align}
where we used independence to conclude $\E [\randk(\nn(\xx))] = \0$.

For the noise, we observe $\E \norm{\randk(\nabla f(\xx))}^2 = \frac{d}{k}\norm{\E \randk(\nabla f(\xx))}^2 = \frac{d}{k} \norm{\E \randk(\gg(\xx))}^2$, and hence
\begin{align*}
  \E{\norm{ {\randk(\gg(\xx))} - \E{\randk(\gg(\xx))} }^2} 
    &= \E \norm{\randk(\nabla f(\xx) + \nn(\xx))}^2 - \norm{\E \randk(\gg(\xx))}^2 \\
    &= \E \norm{\randk(\nabla f(\xx))}^2  + \E \norm{\randk(\nn(\xx))}^2 - \norm{\E \randk(\gg(\xx))}^2 \\
    &= \left(\frac{d}{k}-1\right) \norm{\E \randk(\gg(\xx))}^2 + \frac{k}{d}\E \norm{\nn(\xx)}^2 \\
    &\leq \left(\frac{d}{k}-1\right) \norm{\E \randk(\gg(\xx))}^2 + \frac{k}{d} M_b \norm{\nabla f(\xx)}^2 +  \frac{k}{d} \sigma_b^2 \\
    &\leq \left(\frac{d}{k}-1 + \frac{d}{k} M_b \right)  \norm{\E \randk(\gg(\xx))}^2 +  \frac{k}{d} \sigma_b^2
\end{align*}
the statement of the lemma follows.
\end{proof}

\begin{remark}
These bounds are tight in general. 
When $k=d$, then we just recover $M=M_b$, $\sigma^2=\sigma_b^2$.
For the special case when $M_b=\sigma_b^2=0$, then we recover $M=\frac{d}{k}-1$ and $\sigma^2=0$, i.e, the randomness of the oracle would be only due to the random-$k$ sparsification.
\end{remark}

\paragraph{Biased versus unbiased sparsification.}
In the literature, often an unbiased version of random sparsification is considered, namely the scaled operator $\randk':= \frac{d}{k}\randk$. 

Up to the scaling by $\frac{d}{k}$, the $\randk'$ operator is therefore identical to the $\randk$ operator, and we would expect the same convergence guarantees, in particular with tuning for the optimal stepsize $\gamma'$ (as in our theorems). However, due to the constraint $\gamma \leq \frac{1}{L}$ as a consequence of applying the smoothness in Lemma~\ref{lemma:decent-smoothness}, this rescaling might not be possible if $\gamma' \in [ \frac{k}{dL}, \frac{1}{L}]$ (only if it is smaller). This is the reason, why the lemma below gives a slightly better estimate for the convergence with $\randk'$ than with $\randk$ (the optimization term, the term not depending on $\sigma^2$, is improved by a factor $\cO(\frac{d}{k})$).

However, we argue that this is mostly a technical issue, and not a limitation of $\randk$ vs. $\randk'$. Our Lemma~\ref{lemma:decent-smoothness} does not assume any particular structure on the bias and holds for arbitrary biased oracles. For the particular choice of $\randk$, we know that scaling up the update by a factor $\frac{d}{k}$ is possible in any case, and hence the condition on the stepsize could be relaxed to $\gamma \leq \frac{d}{kL}$ in this special case---recovering exactly the same convergence rate as with $\randk'$.

\begin{lemma}
Let $\gg(\xx)$ be an unbiased gradient oracle with $(M_b,\sigma_b^2)$-bounded noise. Then $\randk'(\gg(\xx))$ is a $(m,\zeta^2)$-biased gradient oracle with $(M,\sigma^2)$-bounded noise, for
\begin{align*}
 m &= 0\,, &
 \zeta^2 &= 0\,,   &
 M &= \left((1+M_b) \frac{d}{k}-1 \right)\,, &
 \sigma^2 &= \frac{d}{k} \sigma_b^2\,.
\end{align*}
\end{lemma}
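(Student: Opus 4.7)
The plan is to verify the two required properties of the oracle $\randk'(\gg(\xx)) = \tfrac{d}{k}\randk(\gg(\xx))$ directly from the identities~\eqref{eq:randk}--\eqref{eq:randklinear} already established for the unscaled operator, together with the hypothesis that $\gg(\xx) = \nabla f(\xx) + \nn(\xx)$ with $\E\nn(\xx)=\0$ and $\E\norm{\nn(\xx)}^2 \leq M_b\norm{\nabla f(\xx)}^2 + \sigma_b^2$.

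First I would show that $\randk'$ is unbiased, which gives $m=0$ and $\zeta^2=0$. Using linearity~\eqref{eq:randklinear} together with $\E\randk(\xx)=\frac{k}{d}\xx$ from~\eqref{eq:randk}, and the independence of the sparsification mask from $\nn(\xx)$, one obtains
\begin{equation*}
\E\randk'(\gg(\xx)) \;=\; \tfrac{d}{k}\bigl(\E\randk(\nabla f(\xx)) + \E\randk(\nn(\xx))\bigr) \;=\; \nabla f(\xx),
\end{equation*}
so the bias term $\bb(\xx)$ vanishes identically and Assumption~\ref{assumption: bias} holds with $m=\zeta^2=0$.

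Next I would bound the variance $\E\norm{\randk'(\gg(\xx)) - \nabla f(\xx)}^2$. The cleanest route is to write it as $\E\norm{\randk'(\gg(\xx))}^2 - \norm{\nabla f(\xx)}^2$, and then use the identity $\E\norm{\randk(\yy)}^2 = \frac{k}{d}\norm{\yy}^2$ conditionally on $\yy = \gg(\xx)$ and take outer expectation over $\xi$. This gives
\begin{equation*}
\E\norm{\randk'(\gg(\xx))}^2 \;=\; \tfrac{d^2}{k^2}\cdot\tfrac{k}{d}\,\E\norm{\gg(\xx)}^2 \;=\; \tfrac{d}{k}\bigl(\norm{\nabla f(\xx)}^2 + \E\norm{\nn(\xx)}^2\bigr),
\end{equation*}
where the split of $\E\norm{\gg(\xx)}^2$ uses that $\nn(\xx)$ is zero-mean. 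Applying the $(M_b,\sigma_b^2)$-noise bound to $\E\norm{\nn(\xx)}^2$ and subtracting $\norm{\nabla f(\xx)}^2$ produces exactly $\bigl((1+M_b)\tfrac{d}{k}-1\bigr)\norm{\nabla f(\xx)}^2 + \tfrac{d}{k}\sigma_b^2$. Since $\bb(\xx)=\0$, the right-hand side of Assumption~\ref{assumption:noise} is $M\norm{\nabla f(\xx)}^2 + \sigma^2$ with the claimed constants, completing the proof.

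There is no real obstacle here: the argument is a direct combination of the second-moment identity for $\randk$, linearity under~\eqref{eq:randklinear}, and independence of the mask from the noise. The only subtlety worth flagging is the order of the two expectations (over the random subset and over $\xi$), but the tower property and independence handle it transparently; this lemma can therefore be presented as a short corollary of the computation already done in the biased random-$k$ proof, with the scaling factor $\tfrac{d}{k}$ appearing in the second moment as $\tfrac{d^2}{k^2}\cdot\tfrac{k}{d}=\tfrac{d}{k}$.
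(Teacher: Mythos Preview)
Your proposal is correct and follows essentially the same approach as the paper: both verify unbiasedness directly and compute the variance as $\E\norm{\randk'(\gg(\xx))}^2 - \norm{\nabla f(\xx)}^2$, arriving at the identical intermediate expression $\tfrac{d}{k}\norm{\nabla f(\xx)}^2 + \tfrac{d}{k}\E\norm{\nn(\xx)}^2 - \norm{\nabla f(\xx)}^2$. The only cosmetic difference is the order of operations---you condition on $\gg(\xx)$ and apply the second-moment identity for $\randk$ first, whereas the paper first splits $\randk'(\nabla f(\xx)+\nn(\xx))$ by linearity and then applies the identity to each piece; both are justified by the tower property and independence, exactly as you note.
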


\begin{proof}
The unbiasedness is easily verified. For the variance, observe
\begin{align*}
 \E \norm{\randk'(\gg(\xx)) - \nabla f(\xx) }^2  &=
 \E \norm{\randk'(\nabla f(\xx) + \nn(\xx))}^2 - \norm{\nabla f(\xx)}^2 \\
 &= \E \norm{\randk'(\nabla f(\xx))}^2 + \E \norm{\randk'(\nn(\xx))}^2 - \norm{\nabla f(\xx)}^2 \\
 &= \frac{d}{k} \norm{\nabla f(\xx)}^2 + \frac{d}{k} \E \norm{\nn(\xx)}^2  - \norm{\nabla f(\xx)}^2 \\
 &=  \left(\frac{d}{k}-1 +\frac{d}{k}M_b \right) \norm{\nabla f(\xx)}^2 +  \frac{d}{k}\sigma_b^2 \qedhere
\end{align*}
\end{proof}

\subsection{Zeroth-Order Gradient}

\begin{lemma}
let $\gg_{GS}$ be a zeroth-order gradient oracle as defined in \eqref{eq: grad free oracle}. Then this oracle can be described as a Biased Gradient Oracle defined in Definition \ref{def: biased gradient oracle} with parameters:
\begin{align*}
    m = 0, \quad \zeta^2 \leq \frac{\tau^2}{4}L^2(d+3)^3, \quad M \leq 4(d+4), \quad \sigma^2 \leq 3\tau^{2}L^{2}(d+4)^{3}
\end{align*}
\end{lemma}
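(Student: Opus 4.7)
The natural approach is to directly instantiate Definition~\ref{def: biased gradient oracle} by splitting $\gg_{GS}$ into its expectation over $\uu$ and the mean-zero fluctuation. Concretely, set
\begin{equation*}
  \bb(\xx) := \E_{\uu}\gg_{GS}(\xx) - \nabla f(\xx), \qquad \nn(\xx,\uu) := \gg_{GS}(\xx,\uu) - \E_{\uu}\gg_{GS}(\xx),
\end{equation*}
so that $\gg_{GS}(\xx,\uu) = \nabla f(\xx) + \bb(\xx) + \nn(\xx,\uu)$ and $\E_{\uu}\nn(\xx,\uu) = \0$ by construction. With this split the lemma reduces to verifying Assumptions~\ref{assumption:noise} and~\ref{assumption: bias} for the two known quantitative estimates from Nesterov and Spokoiny cited in Section~\ref{subsec: ZO}.

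For the bias, I would just quote the Nesterov--Spokoiny bias bound $\norm{\E\gg_{GS}(\xx) - \nabla f(\xx)}^2 \leq \tfrac{\tau^2}{4}L^2(d+3)^3$. Since the right hand side is a constant in $\xx$, this gives Assumption~\ref{assumption: bias} with $m = 0$ and $\zeta^2 = \tfrac{\tau^2}{4}L^2(d+3)^3$. No further work is needed here.

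For the noise term, the key identity is $\E_{\uu}\norm{\nn(\xx,\uu)}^2 = \E_{\uu}\norm{\gg_{GS}(\xx,\uu)}^2 - \norm{\E_{\uu}\gg_{GS}(\xx)}^2$. Plugging in the second-moment estimate $\E\norm{\gg_{GS}(\xx)}^2 \leq 3\tau^2 L^2(d+4)^3 + 4(d+4)\norm{\E\gg_{GS}(\xx)}^2$ yields
\begin{equation*}
  \E_{\uu}\norm{\nn(\xx,\uu)}^2 \leq 3\tau^2 L^2(d+4)^3 + \bigl(4(d+4)-1\bigr)\norm{\E_{\uu}\gg_{GS}(\xx)}^2.
\end{equation*}
Since $\E_{\uu}\gg_{GS}(\xx) = \nabla f(\xx) + \bb(\xx)$, the coefficient of $\norm{\nabla f(\xx) + \bb(\xx)}^2$ is exactly what the noise assumption asks for, giving $M \leq 4(d+4)$ and $\sigma^2 \leq 3\tau^2 L^2(d+4)^3$.

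\textbf{Main obstacle.} There is no real obstacle: both ingredients are already in the excerpt and sourced from Nesterov and Spokoiny. The only care needed is bookkeeping around which norm the bounds are expressed in, specifically noting that Nesterov's second-moment bound is stated in terms of $\norm{\E \gg_{GS}(\xx)}^2 = \norm{\nabla f(\xx) + \bb(\xx)}^2$ and not $\norm{\nabla f(\xx)}^2$, which is precisely what Assumption~\ref{assumption:noise} is written in terms of. Thus no extra application of $\norm{\aa+\bb}^2 \leq 2\norm{\aa}^2 + 2\norm{\bb}^2$ (as in Remark~\ref{rem: noise remark}) is required, and the stated constants are obtained without any additional factors.
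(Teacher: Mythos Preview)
Your proposal is correct and follows essentially the same approach as the paper: define $\bb(\xx)=\E_{\uu}\gg_{GS}(\xx)-\nabla f(\xx)$ and $\nn(\xx,\uu)=\gg_{GS}(\xx,\uu)-\E_{\uu}\gg_{GS}(\xx)$, then read off the bias bound from Nesterov--Spokoiny's Lemma~3 and the noise bound from their Theorem~4 via the fact that the variance is bounded by the second moment. Your use of the exact variance identity even shaves the constant to $4(d+4)-1$ before relaxing to $4(d+4)$, which the paper does not bother to make explicit.
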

\begin{proof}
From \citet[Lemma 3]{Nesterov:2017:RGM:3075317.3075511} we can bound the bias for $\gg_{GS}$ as follows:
\begin{align*}
    \norm{\bb(\xx)}^2
    &= \norm{\E {\gg_{GS}(\xx)} - \nabla f(\xx)}^2 \\
    &\leq \frac{\tau^2}{4}L^2(d+3)^3
\end{align*}
Hence we can conclude that Assumption \ref{assumption: bias} holds with the choice $m=0$, and $\zeta^2 \leq \frac{\tau^2}{4}L^2(d+3)^3$. Moreover, 
\citet[Theorem 4]{Nesterov:2017:RGM:3075317.3075511} provide an upper bound on the second moment of the zeroth-order gradient oracle $\gg_{GS}$:
\begin{align*}
    \mathbb{E} \norm{\gg_{GS} (\xx)}^2 & \leq 3\tau^{2}L^{2}(d+4)^{3} +  4(d+4) \norm{\E {\gg_{GS}(\xx)}}^{2} \\
    & = 3\tau^{2}L^{2}(d+4)^{3} +  4(d+4) \norm{\nabla f(\xx) + \bb(\xx)}^{2}\,.
\end{align*}
The second moment is an upper bound on the variance. Hence we can conclude that Assumption~\ref{assumption:noise} hold with the choice
\begin{align*}
    \sigma^2 &= 3 \tau^{2} L^{2}(d+4)^{3}, \qquad M = 4(d+4)\,. \qedhere
\end{align*}
\end{proof}
\paragraph{Convergence rate for $\gg_{GS}$ under PL condition.} 
Using the result from Theorem \ref{thm: convergence under PL condition}, we can conclude that under PL condition,
\begin{align*}
    \cO \left(\frac{dL}{\mu} \log \frac{1}{\epsilon} + \frac{\tau^2 L^3 (d+1)^3}{\mu^2 \epsilon + \mu \tau^2 L^2(d+1)^3}\right)
\end{align*}
iterations is sufficient for zeroth-order gradient descent to reach $F_T \leq \epsilon + \frac{\zeta^2}{\mu}$.
Observe that for any reasonable choice of $\epsilon =\Omega \bigl( \frac{\zeta^2}{\mu}\bigr)$, the second term is of order $\cO\bigl(\kappa \bigr)$ only, and hence the rate is dominated by the first term. We conclude, that zeroth-order gradient descent requires $\cO \left(d \kappa \log \frac{1}{\epsilon}\right)$ iterations to converge to the accuracy level of the bias term $\frac{\zeta^2}{\mu} = \cO \left(\frac{\tau^2 L^2 d^3}{\mu}\right)$ which cannot be surpassed.

The convergence rate matches with the rate established in \citep[Theorem 8]{Nesterov:2017:RGM:3075317.3075511}, but  in our case the convergence radius (bias term) is $\cO(d)$ times worse than theirs. This shows that in general the rates given in Theorem~\ref{thm: convergence under PL condition} can be improved if stronger assumptions are imposed on the bias term or if it is known how the gradient oracle (and bias) are generated, for instance by Gaussian smoothing.

\subsection{Inexact gradient oracle}
First, we give the definition of a $(\delta,L)$ oracle as introduced in~\cite{devolder_inexact_oracle_DBLP:journals/mp/DevolderGN14}.
\begin{definition}[($\delta, L$)-oracle \cite{devolder_inexact_oracle_DBLP:journals/mp/DevolderGN14}]
Let function $f$ be convex on $Q$, where $Q$ is a closed convex set in $\mathbb{R}^d$. We say that $f$ is equipped with a first-order $(\delta, L)$-oracle if for any $\yy \in Q$ we can compute a pair $(\tilde{f}(\yy), \tilde{\gg}(\yy))$ such that:
\begin{align*}
    0 \leq f(\xx)-\left(\tilde{f}(\yy)+\lin{ \tilde{\gg}(\yy), \xx-\yy}\right) \leq \frac{L}{2}\norm{\xx-\yy}^{2}+\delta\,, \qquad \forall \xx \in Q\,.
\end{align*}
\end{definition}
The constant $\delta$ is called the accuracy of the oracle. A function $f$ is $L$-smooth if and only if it admits a $(0, L)$-oracle. However the class of functions admitting a $(\delta, L)$-oracle is strictly larger and includes non-smooth functions as well.
\paragraph{Stochastic inexact gradient oracle.} \citet{smooth_convex_opt_devolder_RePEc:cor:louvco:2011070} generalized the notion of $(\delta, L)$-gradient oracle to the stochastic case. Instead of using the pair $(\tilde{f}(\xx), \tilde{\gg}(\xx))$, they use the stochastic estimates $(\tilde{F}(\xx, \xi), \tilde{G}(\xx, \xi))$ such that:
\begin{align*}
    &\EEb{\xi}{\tilde{F}(\xx, \xi)} = \tilde{f}(\xx)\,, \\
    &\EEb{\xi}{\tilde{G}(\xx, \xi)} = \tilde{\gg}(\xx)\,, \\
    &\EEb{\xi}{\norm{\tilde{G}(\xx, \xi) - \tilde{\gg}(\xx)}^2} \leq \sigma^{2}\,,
\end{align*}
where $(\tilde{f}(\xx), \tilde{\gg}(\xx))$ is a $(\delta,L)$-oracle as defined above.
From the third inequality we can conclude that $M=0$. Moreover the bias term can be upper bounded by:
\begin{align*}
    \norm{\bb(\xx)}^2 = \norm{\EEb{\xi}{\tilde{G}(\xx, \xi)} - \nabla f(\xx)}^2 = \norm{\tilde{\gg}(\xx) - \nabla f(\xx)}^2 \leq 2 \delta L
\end{align*}
Therefore we can conclude that $m=0$, and $\zeta^2 = 2\delta L$.

\subsection{Biased compression operators}
Deriving the bounds for arbitrary compressors follows similarly as in Section~\ref{sec:topk} when observing that a $\delta$-compressor satisfies~\eqref{eq:topk} with factor $(1-\delta)$ instead of $\left(1-\frac{k}{d}\right)$ as for the top-$k$ compressor (the top-$k$ compressor is a $\delta$-compressor for $\delta = \frac{k}{d}$).

\section{Details on the deep learning experiment}
\label{sec:parameters}
The hyper-parameter choices used in our deep learning experiments (Section \ref{subsec: DL}) are summarized in Table \ref{tab: hyper-params DL}. We tuned the hyper-parameters over the range $[0.1, 0.2, 0.3, 0.4, 0.5]$ for learning rate and $[0.1, 0.25, 0.5, 0.75, 0.9]$ for momentum and choose the combination of hyper-parameters that gave the least test error. In all of the settings we reduced the learning rate at the epoch 150 with a factor of 10 and we trained the network with a batch size of $256$ and a $L2$ penalty regularizer equal to $10^{-4}$.

\begin{table}[!ht]
\caption{Hyper-parameters used in the DL experiment}
\centering
\begin{tabular}{||ccc||ll}
\cline{1-3}
\textbf{method} & \textbf{lr}             & \textbf{momentum $\beta$} &  &  \\ \cline{1-3}
SGD             & \multicolumn{1}{l}{0.1} & 0.9                       &  &  \\
top-$k$         & 0.1                     & 0.75                       &  &  \\
rand-$k$        & 0.1                     & 0.75                       &  &  \\ \cline{1-3}
\end{tabular}
\label{tab: hyper-params DL}
\end{table}

\begin{figure}[!ht]
    \centering
    \includegraphics[width=0.6\linewidth]{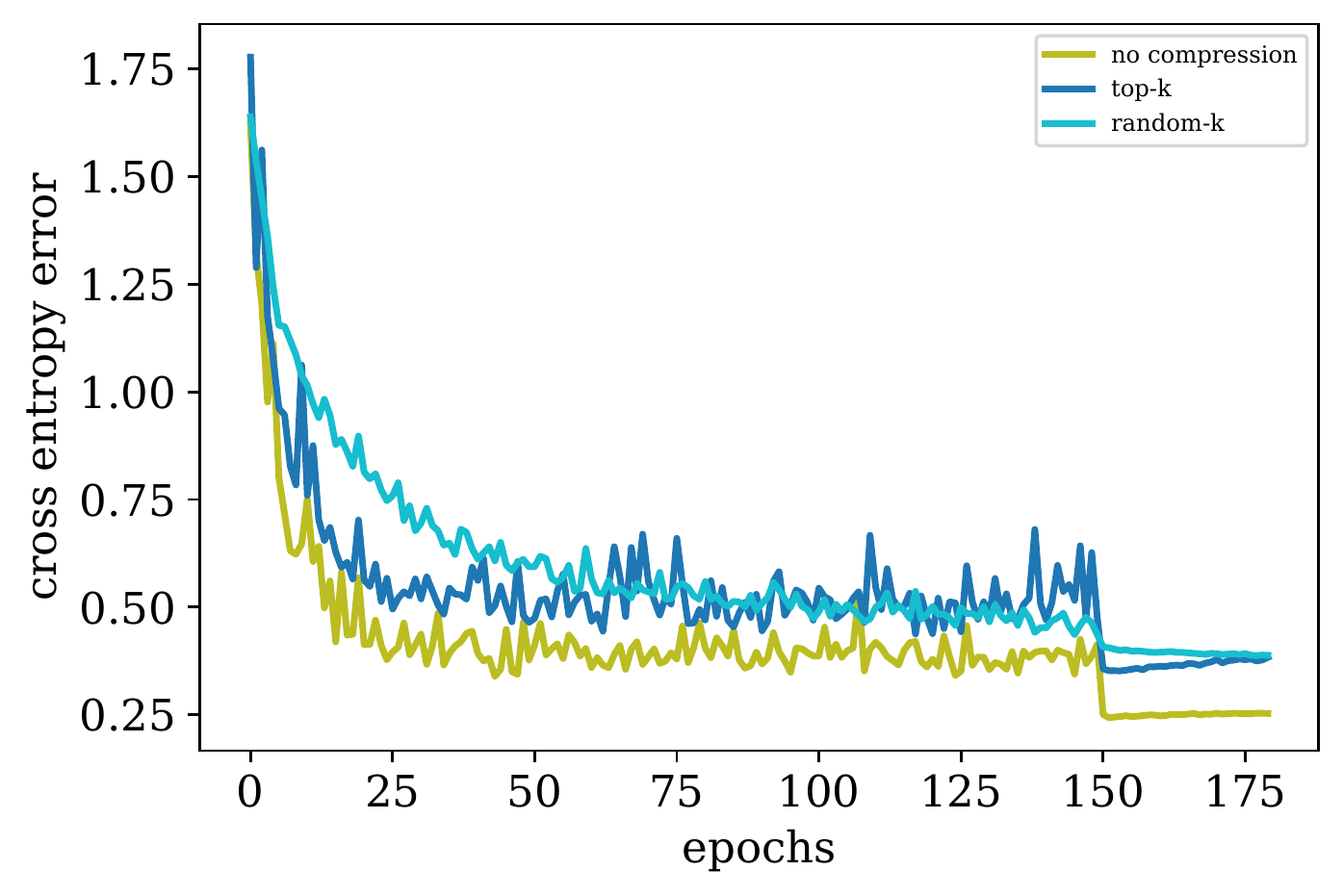}
    \caption{Test error when training ResNet18 on CIFAR-10 dataset. For the compression methods, only $1 \%$ of the coordinates were kept, i.e, $k = 0.01 \times d$.}
    \label{fig: DL_experiment_test}
\end{figure}

\end{document}